\newcommand{\cut}[1]{}
\newcommand{\overridecommand}[2]{
  \providecommand{#1}{}
  \renewcommand{#1}{#2}
}
\overridecommand{\aa}{\mathbf{a}}
\overridecommand{\bb}{\mathbf{b}}
\overridecommand{\cc}{\mathbf{c}}
\overridecommand{\dd}{\mathbf{d}}
\overridecommand{\ee}{\mathbf{e}}
\overridecommand{\ff}{\mathbf{f}}
\overridecommand{\bmg}{\mathbf{g}}   
\overridecommand{\hh}{\mathbf{h}}
\overridecommand{\ii}{\mathbf{i}}
\overridecommand{\jj}{\mathbf{j}}
\overridecommand{\kk}{\mathbf{k}}
\overridecommand{\bml}{\mathbf{l}}   
\overridecommand{\mm}{\mathbf{m}}
\overridecommand{\nn}{\mathbf{n}}
\overridecommand{\oo}{\mathbf{o}}
\overridecommand{\pp}{\mathbf{p}}
\overridecommand{\qq}{\mathbf{q}}
\overridecommand{\rr}{\mathbf{r}}
\overridecommand{\ss}{\mathbf{s}}
\overridecommand{\tt}{\mathbf{t}}
\overridecommand{\uu}{\mathbf{u}}
\overridecommand{\vv}{\mathbf{v}}
\overridecommand{\ww}{\mathbf{w}}
\overridecommand{\xx}{\mathbf{x}}
\overridecommand{\yy}{\mathbf{y}}
\overridecommand{\zz}{\mathbf{z}}
\overridecommand{\BB}{\mathbf{B}}
\overridecommand{\CC}{\mathbf{C}}
\overridecommand{\DD}{\mathbf{D}}
\overridecommand{\EE}{\mathbf{E}}
\overridecommand{\FF}{\mathbf{F}}
\overridecommand{\GG}{\mathbf{G}}
\overridecommand{\HH}{\mathbf{H}}
\overridecommand{\II}{\mathbf{I}}
\overridecommand{\JJ}{\mathbf{J}}
\overridecommand{\KK}{\mathbf{K}}
\overridecommand{\LL}{\mathbf{L}}
\overridecommand{\MM}{\mathbf{M}}
\overridecommand{\NN}{\mathbf{N}}
\overridecommand{\OO}{\mathbf{O}}
\overridecommand{\PP}{\mathbf{P}}
\overridecommand{\QQ}{\mathbf{Q}}
\overridecommand{\RR}{\mathbf{R}}
\overridecommand{\SS}{\mathbf{S}}
\overridecommand{\TT}{\mathbf{T}}
\overridecommand{\UU}{\mathbf{U}}
\overridecommand{\VV}{\mathbf{V}}
\overridecommand{\WW}{\mathbf{W}}
\overridecommand{\XX}{\mathbf{X}}
\overridecommand{\YY}{\mathbf{Y}}
\overridecommand{\ZZ}{\mathbf{Z}}
\overridecommand{\aalpha}{\boldsymbol{\alpha}}
\overridecommand{\bbeta}{\boldsymbol{\beta}}
\overridecommand{\ggamma}{\boldsymbol{\gamma}}
\overridecommand{\ddelta}{\boldsymbol{\delta}}
\overridecommand{\eepsilon}{\boldsymbol{\epsilon}}
\overridecommand{\vvarepsilon}{\boldsymbol{\varepsilon}}
\overridecommand{\zzeta}{\boldsymbol{\zeta}}
\overridecommand{\eeta}{\boldsymbol{\eta}}
\overridecommand{\ttheta}{\boldsymbol{\theta}}
\overridecommand{\vvartheta}{\boldsymbol{\vartheta}}
\overridecommand{\iiota}{\boldsymbol{\iota}}
\overridecommand{\kkappa}{\boldsymbol{\kappa}}
\overridecommand{\llambda}{\boldsymbol{\lambda}}
\overridecommand{\mmu}{\boldsymbol{\mu}}
\overridecommand{\nnu}{\boldsymbol{\nu}}
\overridecommand{\xxi}{\boldsymbol{\xi}}
\overridecommand{\ppi}{\boldsymbol{\pi}}
\overridecommand{\vvarpi}{\boldsymbol{\varpi}}
\overridecommand{\rrho}{\boldsymbol{\rho}}
\overridecommand{\vvarrho}{\boldsymbol{\varrho}}
\overridecommand{\ssigma}{\boldsymbol{\sigma}}
\overridecommand{\vvarsigma}{\boldsymbol{\varsigma}}
\overridecommand{\ttau}{\boldsymbol{\tau}}
\overridecommand{\uupsilon}{\boldsymbol{\upsilon}}
\overridecommand{\pphi}{\boldsymbol{\phi}}
\overridecommand{\vvarphi}{\boldsymbol{\varphi}}
\overridecommand{\cchi}{\boldsymbol{\chi}}
\overridecommand{\ppsi}{\boldsymbol{\psi}}
\overridecommand{\oomega}{\boldsymbol{\omega}}
\overridecommand{\GGamma}{\boldsymbol{\Gamma}}
\overridecommand{\DDelta}{\boldsymbol{\Delta}}
\overridecommand{\TTheta}{\boldsymbol{\Theta}}
\overridecommand{\LLambda}{\boldsymbol{\Lambda}}
\overridecommand{\XXi}{\boldsymbol{\Xi}}
\overridecommand{\PPi}{\boldsymbol{\Pi}}
\overridecommand{\SSigma}{\boldsymbol{\Sigma}}
\overridecommand{\UUpsilon}{\boldsymbol{\Upsilon}}
\overridecommand{\PPhi}{\boldsymbol{\Phi}}
\overridecommand{\PPsi}{\boldsymbol{\Psi}}
\overridecommand{\OOmega}{\boldsymbol{\Omega}}
\newcommand{\D}{\mathcal{D}}  
\newcommand{\E}{\mathbb{E}}   
\newcommand{\F}{\mathcal{F}}
\newcommand{\I}{\mathcal{I}}
\newcommand{\N}{\mathcal{N}}
\newcommand{\R}{\mathbb{R}}   
\newcommand{\1}{\mathbf{1}}   
\DeclareMathOperator*{\argmax}{argmax}
\newtheorem{proposition}{Proposition}
\newtheorem{definition}{Definition}
\icmltitlerunning{
Learning from Irregularly-Sampled Time Series: A Missing Data Perspective}
\begin{document}

\twocolumn[
\icmltitle{
Learning from Irregularly-Sampled Time Series: A Missing Data Perspective}



\icmlsetsymbol{equal}{*}

\begin{icmlauthorlist}
\icmlauthor{Steven Cheng-Xian Li}{umass}
\icmlauthor{Benjamin M. Marlin}{umass}
\end{icmlauthorlist}

\icmlaffiliation{umass}{University of Massachusetts Amherst}

\icmlcorrespondingauthor{Steven Cheng-Xian Li}{li.stevecx@gmail.com}

\icmlkeywords{Generative Models, Missing Data, Time Series}

\vskip 0.3in
]



\printAffiliationsAndNotice{}  

\begin{abstract}
  Irregularly-sampled time series occur in many domains including healthcare.
They can be challenging to model because they do not naturally yield
a fixed-dimensional representation as required by many
standard machine learning models.
In this paper, we consider irregular sampling from the perspective of
missing data.
We model observed irregularly-sampled time series data as
a sequence of index-value pairs sampled from a continuous
but unobserved function.
We introduce an encoder-decoder framework for learning from
such generic indexed sequences.
We propose learning methods for this framework based on
variational autoencoders and generative adversarial networks.
For continuous irregularly-sampled time series,
we introduce continuous convolutional layers that can efficiently
interface with existing neural network architectures.
Experiments show that our models are able to achieve
competitive or better classification results
on irregularly-sampled multivariate time series
compared to recent RNN models
while offering significantly faster training times.

\end{abstract}

\section{Introduction}

Irregularly-sampled time series are characterized by non-uniform
time intervals between successive measurements.
Such data naturally occur in many real world domains.
For example, in clinical data, an individual patient's state of health
may be recorded only at irregular time intervals with different subsets of
variables observed at different times.
Further, different individuals typically have different numbers of observations
for different subsets of variables observed at different time points,
including after aligning to events like time of admission or disease onset.

These characteristics of irregularly-sampled time series data
create multiple challenges for classical machine learning models
and algorithms that require data to be defined with respect to
a fixed dimensional feature space.
However, there has been significant recent progress on this problem.
For example,
the GRU-D model was proposed
as a direct extension of discrete time RNNs to the case of
continuous time observations \citep{che2018recurrent}.
The model uses exponential decay dynamics applied to either visible
or latent states.
\citet{rubanova2019latent} proposed latent ordinary differential equation (ODE)
models as a more natural way to model continuous dynamics.
Latent ODEs extend the neural ODE model \citep{chen2018neural},
which enables modeling of complex ODEs using neural networks.
However, many of these models can be slow to learn
due to their sequential nature.

The focus of this paper is on learning from a collection of
irregularly-sampled time series that are observed over a fixed time span.
The specific tasks we want to accomplish are:
i) learning the distribution of the latent temporal process,
ii) given a time series, inferring the distribution of
the corresponding latent process, and
iii) classification of time series.
If we view each time series as observations
sampled from a complete latent process defined over a time span $[0,T]$,
this is essentially a missing data problem
as we only have information about the latent process
at a subset of points in time within $[0, T]$.

Learning complex distributions in the presence of missing data
is a problem that has received substantial recent attention.
For example, models have recently been proposed based on
variational autoencoders (VAEs) \citep{kingma2013auto}
such as partial VAEs \citep{ma2018partial,ma2019eddi}
and MIWAE \citep{mattei2019miwae}.
Implicit models based on generative adversarial networks (GANs)
\citep{goodfellow2014generative} have also been recently proposed
such as MisGAN \citep{li2018learning}.
However, these models only work for problems with finite dimensional data
such as recommendation systems or image modeling.
Neural processes \citep{garnelo2018conditional,garnelo2018neural}
can be seen as an extension of partial VAEs
for the continuous space that model distributions over functions.

The main contribution of this paper is the development of a
scalable framework for learning distributions from
irregularly-sampled time series.
We transform modeling such time series data into
a general missing data problem and introduce an encoder-decoder
framework that unifies a number of previous approaches
to modeling incomplete data based on variational autoencoders.
In addition, we propose a GAN-based model for training this framework
that we show outperforms the recently proposed MisGAN model.
We then introduce continuous convolutional layers
for handling irregularly-sampled time series
to efficiently interface with existing neural network architectures.
Experiments show that our framework is able to achieve
competitive or better classification results
on irregularly-sampled multivariate time series classification tasks
compared to recent time series models such as Latent ODE,
while can be trained faster by an order of magnitude.

Our implementation is available at \\
{\small\url{https://github.com/steveli/partial-encoder-decoder}}.

\section{Index Representation for Incomplete Data}
\label{sec:framework}

Suppose we have data defined over an index set $\I$.
We can represent a complete data case as a function $f: \I\to\R$
such that the value of the element associated with an index $t\in\I$ is $f(t)$.
We use $\R^\I$ to denote the space of complete data.
For example, for images of size $h \times w$,
an element of the index set $t\in\I$
corresponds to the coordinates of a pixel and $f(t)$ is the
corresponding pixel value.
The index set $\I$ in this case is the collection of
all possible coordinates, $\{1,\dots,h\}\times\{1,\dots,w\}$.
For time series defined within an interval $[0, T]$,
an index is a timestamp of an observation and
the index set $\I$ is the continuous interval $[0, T]$.

In the incomplete data setting such as time series,
we do not observe the entire $f$.
Instead we have access to a set of values $\xx$ of $f$
associated with a set of indices $\tt$ that is a subset of $\I$.
Following \citet{little2014statistical},
the generative process for an incomplete data case $(\xx,\tt)$
in a dataset $\D=\{(\xx_i,\tt_i)\}_{i=1}^n$ can be
decomposed into three steps:
i) sampling a complete data $f$ from a distribution $p_{\F}(f)$
over $\R^\I$,
ii) sampling a set of indices $\tt=[t_{i}]_{i=1}^{|\tt|}$ 
from a distribution $p_\I(\tt|f)$ over the power set $2^\I$
conditioned on the sampled $f$,
and iii) retaining the values of $f$ at the sampled indices $\tt$
to form a set of
corresponding observed values $\xx = [f(t_{i})]_{i=1}^{|\tt|}$.


We note that this representation of incomplete data is
\emph{permutation invariant}, that is,
the incomplete data $(\xx,\tt)$ is equivalent to
$([x_{\pi(i)}]_{i=1}^{|\tt|},[t_{\pi(i)}]_{i=1}^{|\tt|})$
for any permutation $\pi$ of $\{1,\dots,|\tt|\}$.
We will later discuss why this property is important for constructing the
encoder in Section~\ref{sec:ae}.


The goal of this work is to model the complete data distribution
$p_{\F}$ given only the incomplete observations
contained in the dataset $\D$.
We do not focus on learning the distribution $p_\I$
as this distribution is typically not the primary concern
in the applications we focus on.
For simplicity, we make the further assumption that $f$ and $\tt$
are independent, that is, the generative process of
an incomplete case $(\xx,\tt)$ is given by
\begin{equation}
  f\sim p_{\F}(f), \quad
  \tt \sim p_\I(\tt), \quad
  \xx = [f(t_{i})]_{i=1}^{|\tt|}.
  \label{eq:indep gen}
\end{equation}
In Appendix~\ref{sec:independence}
we will discuss the implications of this assumption and how to relax it.
In the next section, we present models for finite index sets.
In Section~\ref{sec:cont time} we present models for continuous index sets.

\section{Incomplete Data with Finite Index Set}

In this section, we focus on the case where the index set $\I$ is finite.
We begin by describing a base encoder-decoder framework,
which can be trained by models based on VAEs and GANs.

\subsection{Encoder-Decoder Framework}
\label{sec:ae}

We employ a general encoder-decoder framework
for modeling incomplete data.
For the decoder, we model the distribution of the complete data $p_{\F}(f)$
as a two-step procedure:
\begin{equation}
  \zz\sim p_z(\zz),\quad
  f = g_\theta(\zz)
  \label{eq:gen complete}
\end{equation}
where we first draw a latent code $\zz$
from a simple distribution $p_z(\zz)$ such as a standard Gaussian.
We then transform $\zz$ into a complete sample $f\in\R^\I$
through a deterministic function $g_\theta(\zz)$.

The encoder, denoted $q_\phi(\zz|\xx,\tt)$,
aims to model the posterior distribution of the latent code
associated with an incomplete example $(\xx,\tt)$.
Since the representation of incomplete data is permutation invariant
as noted earlier,
the encoder should also be permutation invariant \citep{zaheer2017deep}.
Below we define such a function $m(\xx,\tt)$ that provides
a simple construction of the encoder.
\begin{definition}
  \label{def:masking}
  The masking function $m(\xx,\tt)$ maps an incomplete
  data case $(\xx,\tt)$ to a masked form in $\R^{\I}$
  with all missing entries replaced by zero.
  Specifically, let $\vv=m(\xx,\tt)$ then each entry of $\vv$ has the form
  $v_t = \sum_{i=1}^{|\tt|} x_i\1\{t_i=t\}$ for all $t\in\I$.
\end{definition}

The masking function serves as an interface that transforms
an incomplete data case $(\xx,\tt)$ with arbitrary size
to the masked form $m(\xx,\tt)$ of fixed dimension in $\R^\I$.

We can construct the encoder distribution to have the form of
$q_\phi(\zz|m(\xx,\tt))$,
where the distribution is only parameterized by the fixed-dimensional
masked data $m(\xx,\tt)$.
For example, we can use a Gaussian encoder,
$q_\phi(\zz|\xx,\tt) = \N(\zz | \mu_\phi(\vv), \Sigma_\phi(\vv))$
where $\vv=m(\xx,\tt)$,
with its mean $\mu_\phi$ and diagonal covariance $\Sigma_\phi$
constructed using neural networks.

\begin{figure}[t]
  \centering
  \hspace*{-2em}
  \includegraphics[width=3.2in, trim=0 10em 0 3em, clip]
  {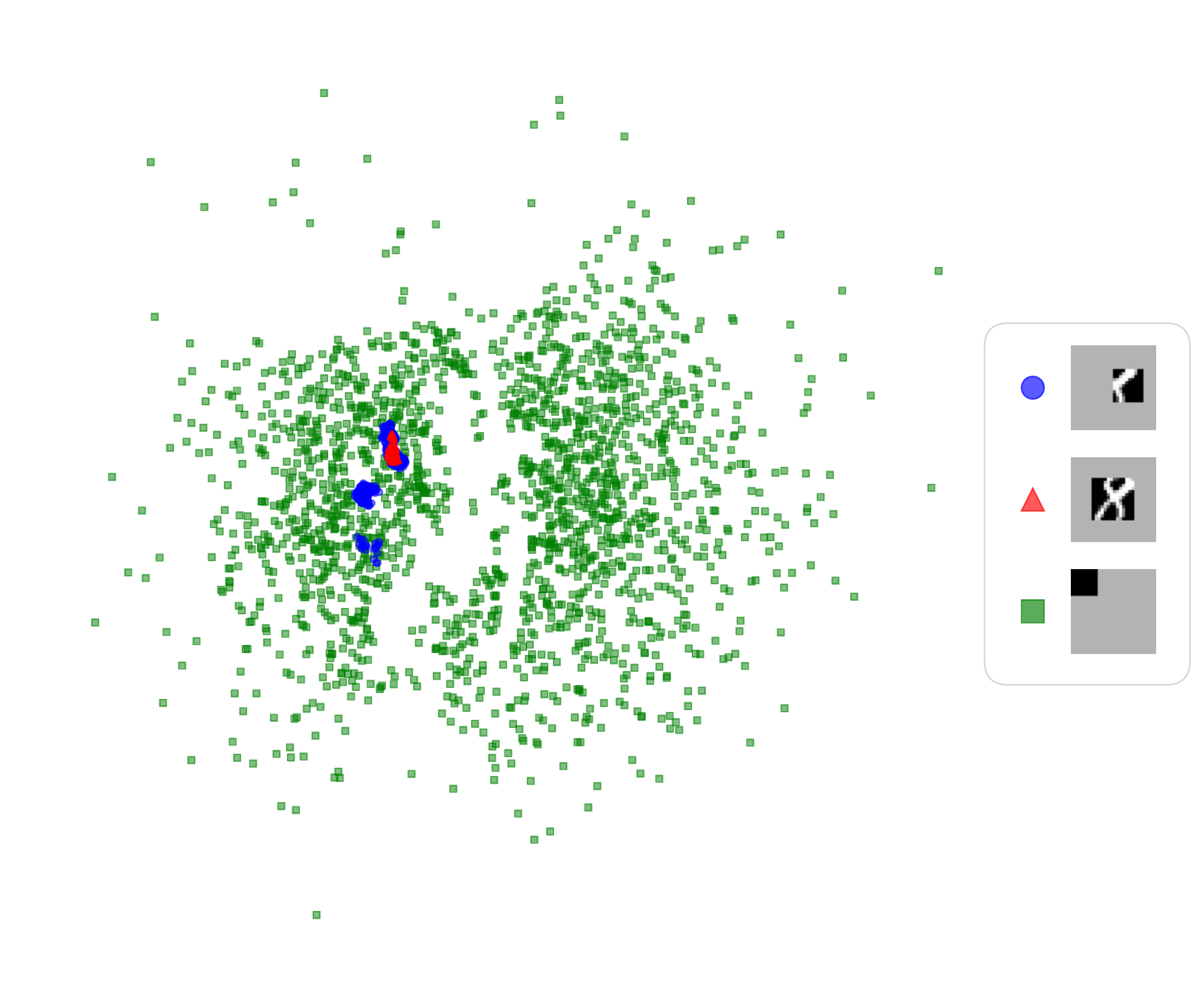} \\
  \includegraphics[width=3.2in]{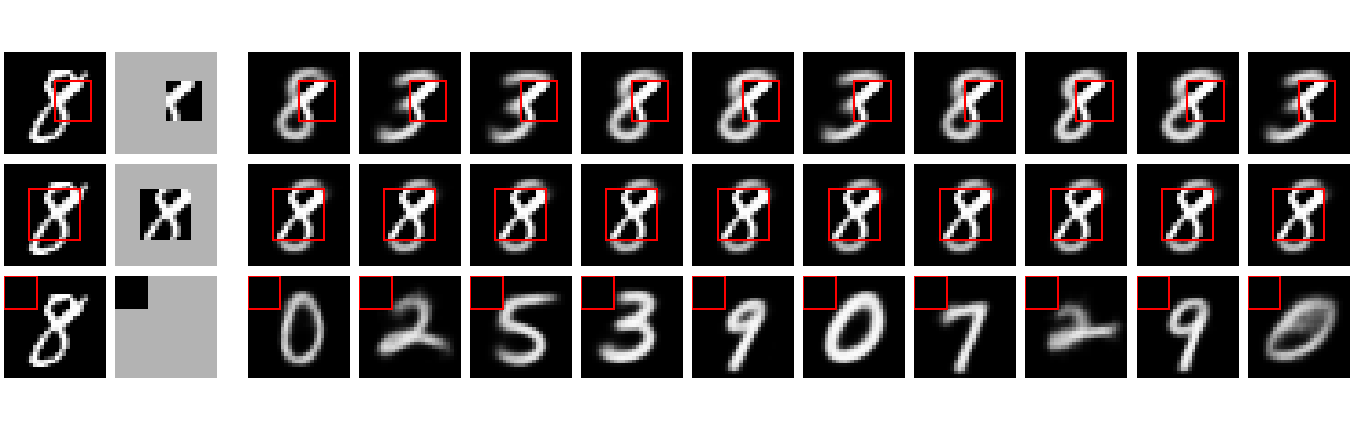}
  \vspace{-1em}
  \caption{
    At the top we plot the 2D latent codes drawn from the
    encoder $q_\phi(\zz|\xx,\tt)$ with three different incomplete
    MNIST examples.
    At the bottom, each row corresponds to one of the three examples
    we encode.
    All three cases come from the same MNIST image as the
    leftmost image in each row except we observe different rectangular
    regions on the image.
    The resulting incomplete images are shown as the second image
    in each row where the gray pixels corresponds to the missing entries.
    The ten images on the right are decoded from the random samples
    drawn from the encoder $q_\phi(\zz|\xx,\tt)$.
    The pixels inside of each red box on those sampled images are the
    observed pixels and those outside are generated by the model (P-VAE)
    described in Section~\ref{sec:impute fid}.
    Note that the blurry completion is due to the insufficient
    capacity of 2D latent codes.
    The latent space plot shows that the second case (red)
    has relatively low uncertainty.
    In contrast, the third case (green) has high uncertainty
    whose encoded distribution looks similar to the Gaussian prior $p_z(\zz)$.
  }
  \label{fig:encode2d}
  \vspace{-1em}
\end{figure}

Note that in the presence of missing data,
we cannot use a deterministic encoder as in standard
autoencoders for complete data,
because different incomplete samples may carry very different levels
of uncertainty as shown in Figure~\ref{fig:encode2d}.
In other words,
there could be many different latent codes $\zz$ that
can be decoded into a variety of complete samples that are
consistent with the observed part of the data.
%

We next describe two training strategies for learning the encoder and decoder.

\subsection{Partial Variational Autoencoder}
\label{sec:pvae}

To train the framework using maximum likelihood,
we construct a proper density model by
adding independent noise to each component of $g_\theta(\zz,t_i)$
for all $t_i\in\tt$,
where $g_\theta(\zz,t_i)$ denotes $f(t_i)$ with $f=g_\theta(\zz)$.
For example, for real-valued data,
the distribution $p(x_i|g_\theta(\zz,t_i))$,
or referred in short as $p_\theta(x_i|\zz,t_i)$, could
be a Gaussian $\N(x_i|f(t_i),\sigma^2)$ with a pre-defined variance $\sigma^2$.
As a result, the joint distribution of an incomplete data case $(\xx,\tt)$ is
\begin{align*}
  p(\xx,\tt)
  &= \int p(\zz) p_\I(\tt) \prod_{i=1}^{|\tt|} p_\theta(x_i|\zz,t_i) d\zz.
\end{align*}
Since this marginal is intractable, we
instead maximize a variational lower bound on $\log p(\xx,\tt)$ given by
\begin{equation}
  \int q_\phi(\zz|\xx,\tt)\log
  \frac{p_z(\zz) p_\I(\tt) \prod_{i=1}^{|\tt|} p_\theta(x_i|\zz,t_i)}
  {q_\phi(\zz|\xx,\tt)} d\zz.
  \label{eq:elbo}
\end{equation}

To learn the distribution of the data parameterized by $g_\theta(\zz)$,
we only need to learn the parameters of $p_\theta(x|\zz,t)$ and
$q_\phi(\zz|\xx,\tt)$, denoted by $\theta$ and $\phi$ respectively.
Due to the assumed independence between $\tt$ and $\zz$,
when taking the derivative of \eqref{eq:elbo}
with respect to $\theta$ and $\phi$,
the term $p_\I(\tt)$ can be dropped.
As a result, the model can be equivalently learned by maximizing
the variational lower bound on the conditional log-likelihood
given below
where $p_\D$ denotes the empirical distribution of the training dataset $\D$:
\begin{equation}
  \E_{(\xx,\tt)\sim p_\D}
  \E_{\zz\sim q_\phi(\zz|\xx,\tt)}\!\left[
    \log \frac{p_z(\zz) \prod_{i=1}^{|\tt|} p_\theta(x_i|\zz,t_i)}
  {q_\phi(\zz|\xx,\tt)} \right]\!\!.\!\!
  \label{eq:pvae}
\end{equation}
This training objective has been previously introduced as
the Partial Variational Autoencoder \citep{ma2018partial,ma2019eddi},
which we abbreviate as P-VAE.
Neural processes \citep{garnelo2018conditional,garnelo2018neural}
and MIWAE \citep{mattei2019miwae} also have the similar structure.
All of these previous approaches are introduced as optimizing
a conditional objective directly while
here we start with the complete generative process that takes the
point process $p_\I$ into account.
See Appendix~\ref{sec:independence}
on the general setting without the independence assumption.

Similar to VAEs, we can use reparameterizable distributions for
the encoder $q_\phi(\zz|\xx,\tt)$, such as Gaussians as we described in
Section~\ref{sec:ae}.
There are various techniques to construct more expressive encoders
that can also be used in our case.
For example,
we can apply inverse autoregressive flows \citep{kingma2016improved}
to transform distributions or
use semi-implicit variational inference \citep{yin2018semi}
to flexibly construct expressive encoders.
Moreover, the objective \eqref{eq:pvae} can also adopt
importance weighted autoencoders \citep{burda2015importance,mattei2019miwae}
to optimize a tighter variational bound.

\subsection{Partial Bidirectional GAN}
\label{sec:pbigan}

Unlike P-VAE, which requires specifying an explicit density,
we can instead learn the distribution $p_\F(f)$
parameterized by \eqref{eq:gen complete}
implicitly based on generative adversarial networks (GANs)
\citep{goodfellow2014generative}.
Inspired by the Bidirectional GAN (BiGAN)
\citep{donahue2016bigan,dumoulin2017adversarially},
we propose a model that improves on MisGAN \citep{li2018learning}
for modeling incomplete data.
We call the proposed approach the \text{Partial Bidirectional GAN} (P-BiGAN).

The overall structure of P-BiGAN is shown in Figure~\ref{fig:pbigan},
which consists of a separate encoding and decoding part.
Given an incomplete dataset $\D=\{(\xx_i,\tt_i)\}_{i=1}^{n}$,
P-BiGAN aims to match
the joint distribution of the incomplete data $(\xx,\tt)$
sampled from $\D$
and the corresponding code $\zz$ drawn from $p_\phi(\zz|\xx,\tt)$ to
the joint distribution of generated masked outputs
$(g_\theta(\zz',\tt'), \tt')$ where $\zz'$ is a random latent code
drawn from the prior $p_z(\zz')$ and $\tt'$ is a set of random indices
separately sampled from $\D$.\footnote{
  Here $\tt'$ is essentially drawn from $p_\I(\tt)$,
  the marginal of $p_\D(\xx,\tt)\equiv p_\D(\xx)p_\I(\tt)$,
due to the independence assumption.}
Note that we use $g_\theta(\zz,\tt)$ as shorthand notation
for $[g_\theta(\zz,t_i)]_{i=1}^{|\tt|}$.

Specifically, P-BiGAN tries to solve
the following minimax optimization problem:
\begin{equation}
  \min_{\theta,\phi}\max_D L(D,\theta,\phi)
  \label{eq:bigan objective}
  \vspace{-.8em}
\end{equation}
where 
\begin{align*}
  L(D,\theta,\phi)&=
  \E_{(\xx,\tt)\sim p_\D}\E_{\zz\sim p_\phi(\zz|\xx,\tt)}\left[
      \log D(\xx,\tt,\zz)
  \right] \\
  &
  \!\!\!\!\!\!\!
  +\E_{\zz\sim p_z(\zz)}\E_{(\xx,\tt)\sim p_\D}\left[
    \log (1 - D(g_\theta(\zz,\tt),\tt,\zz))
  \right].
\end{align*}
P-BiGAN is compatible with many GAN variations.
Other form of $L(D,\theta,\phi)$
such as the loss used by BigBiGAN \citep{donahue2019large}
can also be applied.
The encoder of P-BiGAN can be constructed more flexibly than P-VAE
as we don't need to evaluate the density of the drawn samples.
For example, we can construct a distribution using the
generative process shown below
where the encoded samples are first drawn from a parameterized Gaussian
followed by a transformation $g_\phi$:
\[
  \vv=m(\xx,\tt),\
  \uu\sim\N(\mu_\phi(\vv),\Sigma_\phi(\vv)),\
  \zz=g_\phi(\uu).
\]


The discriminator of P-BiGAN takes as input
an incomplete data sample $(\xx,\tt)$ and its corresponding
code $\zz$.
Following MisGAN \citep{li2018learning},
the discriminator is constructed in the form of $D(m(\xx,\tt),\zz)$,
which can also be parameterized by neural networks.
Proposition~\ref{prop:marginals} below justifies the use of $m(\xx,\tt)$
when the data lies in a finite space,
under the independence assumption described in Section~\ref{sec:framework}.
\begin{proposition}
  \label{prop:marginals}
  (Adapted from \citet[Theorem~2]{li2018learning})
  When the data space and index set are both finite,
  given a distribution
  $p_\I(\tt)$,
  two distributions $p_\theta(f)$ and $p_{\theta'}(f)$
  induce the same distribution
  of $m(\xx,\tt)$
  if and only if they have the same marginals
  $p_\theta(\xx|\tt) = p_{\theta'}(\xx|\tt)$
  for all $\tt$ with $p_\I(\tt) > 0$.
\end{proposition}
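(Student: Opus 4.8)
I would prove the two implications separately; write $P_\theta$ for the distribution of $m(\xx,\tt)$ induced by a complete-data distribution $p_\theta(f)$, and recall that under the independence assumption $p_\theta(\xx\mid\tt)=\Pr_{f\sim p_\theta}[f|_\tt=\xx]$, so the various conditionals are all marginals of the one distribution $p_\theta$. Unwinding the generative process, $m(f|_\tt,\tt)=\vv$ holds exactly when $f$ agrees with $\vv$ on $\tt$ and $\vv$ vanishes off $\tt$; hence, writing $\mathrm{supp}(\vv)=\{t:v_t\neq 0\}$ and $\vv|_\tt$ for the sub-vector on the coordinates in $\tt$,
\[
  P_\theta(\vv)\;=\;\sum_{\tt\,\supseteq\,\mathrm{supp}(\vv)} p_\I(\tt)\,p_\theta(\vv|_\tt\mid\tt).
\]
Terms with $p_\I(\tt)=0$ drop out, so $P_\theta$ is a fixed non-negative combination of the conditionals $p_\theta(\cdot\mid\tt)$ over $\tt$ with $p_\I(\tt)>0$; this immediately gives the ``if'' direction. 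The ``only if'' direction is the substance.

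For ``only if'', the obstruction is that the filler value $0$ collides with a genuine data value $0$, so many index sets $\tt$ contribute to a given $\vv$ and the map above is far from triangular. The key observation I would use is that a \emph{nonzero} coordinate of $\vv$ cannot have come from a missing entry: for any $A\subseteq\I$ and any all-nonzero assignment $\yy_A\in(V\setminus\{0\})^A$ on $A$, the event $\{\vv|_A=\yy_A\}$ forces $A\subseteq\tt$, so by independence of $f$ and $\tt$,
\[
  P_\theta(\vv|_A=\yy_A)\;=\;c_A\,p_\theta(\yy_A\mid A),\qquad c_A:=\sum_{\tt\,\supseteq\,A}p_\I(\tt).
\]
The left-hand side is a marginal of $P_\theta$, hence equal for $\theta$ and $\theta'$, and $c_A>0$ whenever $A$ is contained in some $\tt$ with $p_\I(\tt)>0$. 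So $P_\theta=P_{\theta'}$ already forces $p_\theta(\yy_A\mid A)=p_{\theta'}(\yy_A\mid A)$ for every such $A$ and every all-nonzero $\yy_A$.

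To upgrade this to arbitrary conditionals, I would use consistency (the conditionals are marginals of the single $p_\theta$) together with inclusion--exclusion over which ``zero-looking'' coordinates are genuinely zero. Fix $\tt$ with $p_\I(\tt)>0$ and an assignment $\ww$ on $\tt$ with support $S=\{t\in\tt:w_t\neq 0\}$. Expanding $\bigcap_{t\in\tt\setminus S}\{f(t)=0\}$ by inclusion--exclusion yields
\[
  p_\theta(\ww\mid\tt)\;=\;\sum_{U\subseteq\tt\setminus S}(-1)^{|U|}\sum_{\zz\in(V\setminus\{0\})^{U}}p_\theta\big(\ww|_S\sqcup\zz\,\big|\,S\cup U\big),
\]
a finite alternating sum (finiteness of $V$) of conditionals at all-nonzero assignments on sets $S\cup U\subseteq\tt$. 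Since $c_{S\cup U}\ge p_\I(\tt)>0$, the previous paragraph applies to each summand, so $p_\theta(\ww\mid\tt)=p_{\theta'}(\ww\mid\tt)$, which is the claim.

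The one genuinely delicate point --- the main obstacle --- is the collision between the missing-data filler and a legitimate data value of $0$; note that the statement really does need the conditionals to be consistent (marginals of one distribution), since otherwise one can construct incompatible conditional families that induce the same distribution of $m(\xx,\tt)$. The two moves above, reading off the all-nonzero marginals directly and then undoing the collision by inclusion--exclusion, are exactly what circumvents it; the remaining points (positivity of the scalars $c_A$ on the relevant sets, finiteness of the inner sums) are routine.
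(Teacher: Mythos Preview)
The paper does not actually contain a proof of this proposition: it is stated as ``adapted from'' Theorem~2 of the MisGAN paper and no argument is given in the body or the appendix (the appendix proves only Proposition~2). So there is nothing in this paper to compare your argument against directly.

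That said, your proof is correct and self-contained. The ``if'' direction is immediate from your expansion $P_\theta(\vv)=\sum_{\tt\supseteq\mathrm{supp}(\vv)}p_\I(\tt)\,p_\theta(\vv|_\tt\mid\tt)$. For ``only if'', your two-step strategy --- first recovering the all-nonzero marginals via $P_\theta(\vv|_A=\yy_A)=c_A\,p_\theta(\yy_A\mid A)$ with $c_A>0$ whenever $A$ lies inside some $\tt$ of positive $p_\I$-mass, then recovering arbitrary marginals by inclusion--exclusion over the zero-valued coordinates --- is sound. You correctly identify the consistency of the family $\{p_\theta(\cdot\mid\tt)\}_\tt$ (they are all marginals of a single $p_\theta(f)$) as the hypothesis that makes the inclusion--exclusion step go through; the finiteness of the data space makes the inner sums over $(V\setminus\{0\})^U$ finite, and $c_{S\cup U}\ge p_\I(\tt)>0$ supplies the needed positivity. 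Your remark that the collision between the filler value $0$ and a legitimate data value is the only genuine obstacle is exactly the point, and your argument handles it cleanly.
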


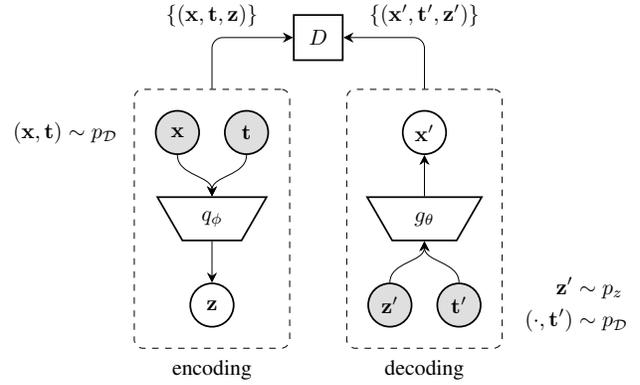
\begin{figure}
  \centering
  \scalebox{.82}{  \begin{tikzpicture}[scale=1.4, >=latex']
    \tikzstyle{every node} = [circle, minimum size=20pt, inner sep=0pt]
    \newcommand\aecolor{white}
    \newcommand\obscolor{gray!25}
    \newcommand\arrow{-Stealth[length=1.6mm,width=1.6mm]}
    \newcommand\labely{-1.75}
    \begin{scope}
      \newcommand\xdis{.8}
      \node[draw, thick, fill=\obscolor] (x) at (0, 1) {$\xx$};
      \node[draw, thick, fill=\obscolor] (t) at (\xdis, 1) {$\tt$};
      \node[draw, thick] (z) at (\xdis/2, -1) {$\zz$};
      \node [trapezium, trapezium angle=-60, minimum width=10, draw, thick]
      (q) at (\xdis/2, 0) {$q_\phi$};
      \draw[\arrow] (x) to[out=270, in=90] (q);
      \draw[\arrow] (q) -- (z);
      \draw[\arrow] (t) to[out=270, in=90] (q);
      \node[draw, fit={(-0.5, -1.5) (\xdis+.5, 1.5)}, rectangle,
      rounded corners, dashed, inner sep=0pt] (box1) {};
      \node[rectangle] at (\xdis/2, \labely) {encoding};
      \node[rectangle] at (-1.3, 1) {$(\xx,\tt)\sim p_\D$};
    \end{scope}
    \begin{scope}[xshift=7em]
      \newcommand\xdis{.8}
      \node[draw, thick] (x') at (\xdis/2, 1) {$\xx'$};
      \node[draw, thick, fill=\obscolor] (t') at (\xdis, -1) {$\tt'$};
      \node[draw, thick, fill=\obscolor] (z') at (0, -1) {$\zz'$};
      \node [trapezium, trapezium angle=-60, minimum width=10, draw, thick]
      (g') at (\xdis/2, 0) {$g_\theta$};
      \draw[\arrow] (g') -- (x');
      \draw[\arrow] (t') to[out=90, in=270] (g');
      \draw[\arrow] (z') to[out=90, in=270] (g');
      \node[draw, fit={(-0.5, -1.5) (\xdis+.5, 1.5)}, rectangle,
      rounded corners, dashed, inner sep=0pt] (box2) {};
      \node[rectangle] at (\xdis/2, \labely) {decoding};
      \node[rectangle] at (2.14, -1) {
        $\begin{aligned}
          \zz' &\sim p_z \\
          (\cdot,\tt') &\sim p_\D
        \end{aligned}$
      };
    \end{scope}
    \node [draw, thick, rectangle, inner sep=7pt]
    (d) at ($(box1)!.5!(box2) + (0,2.1)$) {$D$};
    \path [draw,\arrow,rounded corners=5pt] (box1.north) |-
    node[above, rectangle] {$\{(\xx,\tt,\zz)\}$} (d.west);
    \path [draw,\arrow,rounded corners=5pt] (box2.north) |-
    node[above, rectangle] {$\{(\xx',\tt',\zz')\}$} (d.east);

  \end{tikzpicture}}
  \vspace{-2em}
  \caption{The structure of P-BiGAN.}
  \label{fig:pbigan}
  \vspace{-1em}
\end{figure}

Moreover,
following \citet{donahue2016bigan},
the global optimum of \eqref{eq:bigan objective}
is achieved if and only if
the induced joint distribution over $\xx$, $\tt$ and $\zz$
are identical for the encoder $q_\phi(\zz|\xx,\tt)$ and decoder $g_\theta$.
We can show the following invertibility relationship
between the encoder and the decoder when optimality is attained
(see Appendix~\ref{sec:proof} for the proof).
\begin{proposition}
  \label{prop:inverse}
When the optimally learned encoder and decoder achieve the same joint
distribution over $(\xx,\tt)$ and $\zz$
by optimizing \eqref{eq:bigan objective},
for any $(\xx,\tt)$ with non-zero probability,
if $\zz\sim q_\phi(\zz|\xx,\tt)$ we have
$g_\theta(\zz,\tt) = \xx$ almost surely.

\end{proposition}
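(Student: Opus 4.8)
The plan is to lean on the standard Bidirectional GAN optimality argument \citep{donahue2016bigan}, adapted so that the index set $\tt$ is carried as a ``side channel'' shared between the encoding and decoding paths. Write $P_E$ for the joint law of $(\xx,\tt,\zz)$ along the encoding path, i.e.\ $(\xx,\tt)\sim p_\D$ and $\zz\sim q_\phi(\zz|\xx,\tt)$, and $P_G$ for the joint law along the decoding path, i.e.\ $\tt\sim p_\I$, $\zz\sim p_z$, and $\xx=g_\theta(\zz,\tt)$. The first step is to recall that, for fixed $\theta,\phi$, the discriminator-optimal value of $L(D,\theta,\phi)$ equals $-\log 4$ plus twice the Jensen--Shannon divergence between $P_E$ and $P_G$ (attained at $D^*$ equal to the density ratio $dP_E/d(P_E+P_G)$ with respect to the common dominating measure $P_E+P_G$); hence the global optimum of \eqref{eq:bigan objective} is attained exactly when $P_E=P_G$ as probability measures on the product space. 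This is precisely where the hypothesis ``the optimally learned encoder and decoder achieve the same joint distribution'' is invoked.

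Given $P_E=P_G$, the rest is short. Consider the ``graph'' event $A=\{(\xx,\tt,\zz):\xx=g_\theta(\zz,\tt)\}$. By construction of the decoding path, $\xx$ is a deterministic function of $(\zz,\tt)$, so $P_G(A)=1$; therefore $P_E(A)=1$ as well, i.e.\ the complement $A^c$ is $P_E$-null. Now disintegrate $P_E$ over its $(\xx,\tt)$-marginal, which is exactly $p_\D$: by the existence of regular conditional probabilities (the underlying spaces being standard Borel, e.g.\ finite or Euclidean), $P_E(A^c)=\int q_\phi\big(\{\zz:g_\theta(\zz,\tt)\ne\xx\}\,\big|\,\xx,\tt\big)\,dp_\D(\xx,\tt)=0$, so the inner integrand vanishes for $p_\D$-almost every $(\xx,\tt)$. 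Since $p_\D$ is the empirical distribution of the training set, it is purely atomic with finite support, and a $p_\D$-null set cannot contain any atom; hence for \emph{every} $(\xx,\tt)$ with $p_\D(\xx,\tt)>0$ we get $q_\phi(\{\zz:g_\theta(\zz,\tt)\ne\xx\}\mid\xx,\tt)=0$, which is precisely the claim that $g_\theta(\zz,\tt)=\xx$ almost surely when $\zz\sim q_\phi(\zz|\xx,\tt)$.

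The main obstacle is contained in the first step: making the ``$P_E=P_G$ at the optimum'' claim rigorous when $P_G$ is a \emph{singular} measure (it is supported on the lower-dimensional graph of $g_\theta$), so the discriminator analysis cannot be phrased in terms of Lebesgue densities. The fix is to run the usual $\min$--$\max$ computation against a common dominating measure such as $(P_E+P_G)/2$, for which both $P_E$ and $P_G$ admit densities; the Jensen--Shannon characterization then goes through verbatim, and equality of the two joints is equivalent to equality of those densities almost everywhere. A secondary point to watch is the direction of the conditioning: the proposition fixes $(\xx,\tt)$ and randomizes $\zz$, so the disintegration must be taken with respect to the $(\xx,\tt)$-marginal, and it is the atomic nature of $p_\D$ that upgrades ``$p_\D$-almost every $(\xx,\tt)$'' to ``every observed $(\xx,\tt)$''. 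Everything else --- the graph event, the transfer $P_G(A)=1\Rightarrow P_E(A)=1$, and the disintegration --- is routine.
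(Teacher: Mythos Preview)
Your proposal is correct and follows essentially the same approach as the paper: both arguments identify the encoder and decoder joints, use that the decoder joint is concentrated on the graph $\{\xx=g_\theta(\zz,\tt)\}$, and then condition on $(\xx,\tt)$ to conclude. Your version is more measure-theoretically careful---the paper simply manipulates a Dirac-delta ``density'' for $q_\phi$ and integrates, whereas you work with the graph event, disintegrate properly, and explicitly address the singularity of $P_G$ and the atomicity of $p_\D$ needed to pass from ``$p_\D$-a.e.'' to ``every observed $(\xx,\tt)$''; these are points the paper leaves implicit.
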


In practice, it is hard to achieve optimality with GAN training,
and therefore we usually don't have a very good match
between $g_\theta(\zz,\tt)$ and $\xx$ as described in
Proposition~\ref{prop:inverse}.
For applications that rely on the encoded representation $\zz$
such as those that we will present later in Section~\ref{sec:apps},
we found that further adding an autoencoding loss in addition
to the original P-BiGAN loss $L(D,\theta,\phi)$ to enforce this consistency
improves the results (see Appendix~\ref{sec:ae reg}).
Specifically, when training the model, we instead use the
following objective with some $\lambda \ge 0$ that controls the strength of
the autoencoding term:
\begin{align}
  L(D,\theta,\phi) + \lambda
  \E_{\zz\sim q_\phi(\zz|\xx,\tt)}\Bigg[
  \sum_{i=1}^{|\tt|} \ell\left(x_i, g_\theta(\zz,t_i)\right)\Bigg]
  \label{eq:p-bigan-ae}
\end{align}
where $\ell(x,x')$ is a loss function that measures the discrepancy
between $x$ and $x'$ such as $L_2$ loss for real-valued data,
which is analogous to the log likelihood term $\log p_\theta(x_i|\zz,t_i)$
in P-VAE.


Finally, we point out that there are two main differences between
P-BiGAN and MisGAN.
First, P-BiGAN utilizes the independence assumption
to sample $\tt'$ directly from the training data instead
of learning the distribution $p_\I$ as in MisGAN.
This not only makes the training faster, but improves the quality of
the resulting data generator when the distribution $p_\I$
is difficult to learn.
Second, the imputer in MisGAN can only be applied to data with
finite index set.
Since P-BiGAN is an encoder-decoder framework,
this not only greatly simplifies the model complexity
but can be generalized to the case of continuous index sets
as we discuss in the next section.

\section{Irregularly-Sampled Time Series:
The Continuous Index Set Case}
\label{sec:cont time}

For continuous time series defined over some time interval $[0, T]$,
the index set $\I=[0,T]$ is no longer finite.
In this section, we propose a computationally efficient encoder-decoder architecture for
modeling irregularly-sampled time series data.

\subsection{Decoder: Kernel Smoother}
\label{sec:kernel smoother}

To model the distribution of continuous functions over
the time interval $[0, T]$,
we first use a standard convolutional neural network (CNN) decoder
to generate a length-$L$ output
$v_1,\dots,v_L$
as the reference values on a set of evenly-spaced locations $u_1,\dots,u_L$
over $[0, T]$, and then construct the function as the smooth interpolation
of those references.
Here we use a kernel smoother to interpolate at arbitrary times.
Specifically, we model irregularly-sampled time series
as samples from a distribution over functions defined by
the following generative process:
\begin{equation}
\begin{split}
  \zz &\sim p_z(\zz), \\
  \vv &= \text{\textsc{CNN}}_\theta(\zz), \\
  f(t) &= \frac{\sum_{i=1}^L K(u_i,t)v_i}{\sum_{i=1}^L K(u_i,t)}
\end{split}
\label{eq:cont decoder}
\end{equation}
where $K$ is a smoothing kernel.
We use the Epanechnikov kernel,
$K(u,t) = \max(3/4 (1 - (|u-t| / \beta)^2), 0)$,
which has finite support so that each location is only
influenced by a small number of its neighbors.
Moreover, we can compute the kernel values among those neighbors only once
in the beginning as those stay constant during training.

This kernel smoother layer can also be applied to
multivariate time series by interpolating each channel independently
using the kernel smoother on a CNN with multi-channel output.\footnote{
For multivariate time series
with $C$ channels defined over the time interval $[0, T]$, the index set
$\I=\{1,\dots,C\} \times [0, T]$.}

\subsection{Encoder: Continuous Convolutional Layer}
\label{sec:cont conv}

Inspired by CNNs, we adapt
the convolutional layer in CNNs to accommodate irregularly-sampled time series.
To mimic the locally-focused receptive field of
standard convolutional layers,
we generalize the discrete filter (or kernel) to a continuous function $w(t)$
defined over a fixed small interval, say $[0, h]$
with a tunable kernel width $h$.
That is, $w(t)=0$ when $t \notin [0, h]$.

Similar to the convolutional layers in CNNs,
we perform
cross-correlation between the continuous filter $w(t)$ and
the masked function $f(t) = \sum_{i=1}^{|\tt|} x_i \delta(t-t_i)$
induced by the observations in time series
as follows, where $\delta(\cdot)$ is the Dirac delta function:\footnote{
  The function $f(t) = \sum_{i=1}^{|\tt|} x_i \delta(t-t_i)$
  defined over $[0,T]$
  is the analogy of the masked function $m(\xx,\tt)$
  in Definition~\ref{def:masking} for the case of continuous index set.}
\begin{align*}
  (w\star f)(r)
  &= \int w(t-r)\bigg(\sum_{i=1}^{|\tt|} x_i \delta(t-t_i)\bigg)dt \\
  &= \sum_{i:\,t_i-r \in[0,h]} w(t_i-r) x_i.
\end{align*}
We apply this operation on $L$ (need not be the same $L$ for the decoder)
evenly-spaced locations
$r_1,\dots,r_L$ spanning the time interval $[0,T]$ to transform
non-uniform inputs to a length-$L$ uniform representation
$[(w\star f)(r_i)]_{i=1}^L$.

We construct the continuous filter $w(t)$
as a piecewise linear function parameterized by
a small number of evenly-spaced knots over $[0, h]$.
This is equivalent to a degree-1 B-spline \citep{piegl2012nurbs}
and backpropagation through such functions can be computed efficiently
\citep{Fey/etal/2018}.
We found that degree-1 B-splines already perform well comparing
with more expensive higher-order B-spline interpolation.

In preliminary experiments,
we compared this architecture with several alternatives.
First, we use a multi-layer perceptron (MLP) to approximate
an arbitrary function as in neural processes
\citep{garnelo2018conditional,garnelo2018neural}.
However, an MLP is not as parameter efficient as
a piecewise linear function whose only parameters are the values of the knots.
We found that we need many  more parameters for an MLP to achieve similar
performance to piecewise linear functions
and the optimization is generally more difficult.
We also compare with a kernel smoother similar to the decoder described in
Section~\ref{sec:kernel smoother} to provide another parameter efficient choice.
Although a kernel smoother gives roughly the same performance,
it is about 20\% slower than the piecewise linear function
due to the expensive normalization.
Note that
although not as efficient as the convolutional structure,
we can also construct the encoder with the attention mechanism
such as in \citet{kim2019attentive} and \citet{lee2019set}.

We can extend this operator to the
case when there are $C_{\text{in}}$ input channels
and $C_{\text{out}}$ output channels.
Given a multi-channel incomplete example
$(\xx,\tt)=\{(\xx_c,\tt_c)\}_{c=1}^{C_{\text{in}}}$,
we define the continuous convolutional layer as
\[
  \text{\textsc{Conv}}_k(r,\xx,\tt) = b_k + \sum_{c=1}^{C_{\text{in}}}\;
  \sum_{i:\,t_{c,i} - r \in[0, h]} w_{c,k}(t_{c,i}-r) x_{c,i}
\]
where a bias term $b_k$ is included similar to standard convolutional layers.
For each time series,
the continuous convolutional layer produces a 2D output
$\VV\in\R^{C_\text{out}\times L}$
where $V_{kj}=\text{\textsc{Conv}}_k(r_j,\xx,\tt)$,
which can then be fed into a regular CNN encoder.
Note that \textsc{Conv} is a permutation invariant function
like the encoders mentioned in Section~\ref{sec:ae}.

Similar to the kernel smoother,
we can also precompute the distance to the neighboring reference points
once in the beginning for the continuous convolutional layer.
Note that the same architecture can also be used for the discriminator
in P-BiGAN.

\section{Applications}
\label{sec:apps}
\begin{figure*}[th]
  \centering
  \def\imgw{.42\textwidth}
  \def\vgap{2em}
  \def\hgap{2.5em}
  \begin{tikzpicture}
    \node[inner sep=0pt,
      label={below:{\footnotesize{
      MNIST: independent dropout with 90\% missing}}}
    ] (r1) { \includegraphics[width=\imgw]{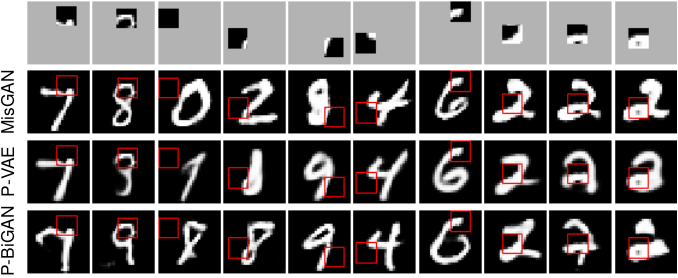}};
    \node[inner sep=0pt,right=\hgap of r1,
      label={below:{\footnotesize{
      MNIST: independent dropout with 90\% missing}}}
    ] (r2) { \includegraphics[width=\imgw]{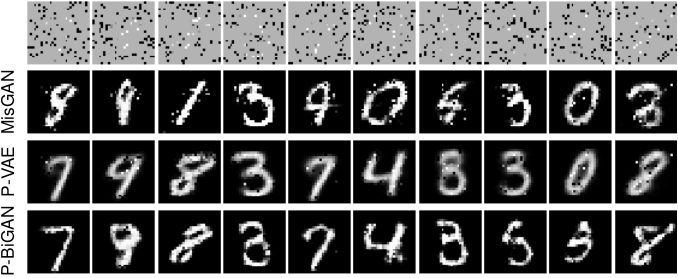}};
    \node[inner sep=0pt,below=\vgap of r1,
      label={below:{\footnotesize{
      CelebA: square observation with 90\% missing}}}
    ] {\includegraphics[width=\imgw]{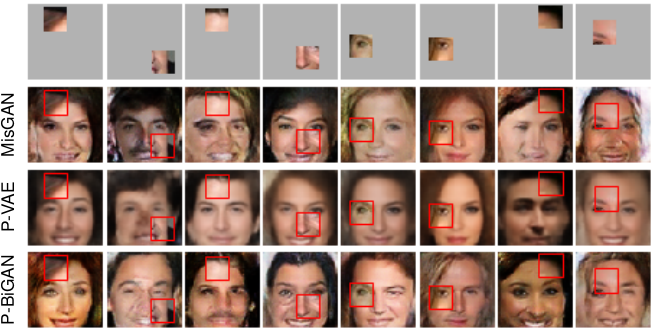}};
    \node[inner sep=0pt,below=\vgap of r2,
      label={below:{\footnotesize{
      CelebA: independent dropout with 90\% missing}}}
    ] {\includegraphics[width=\imgw]{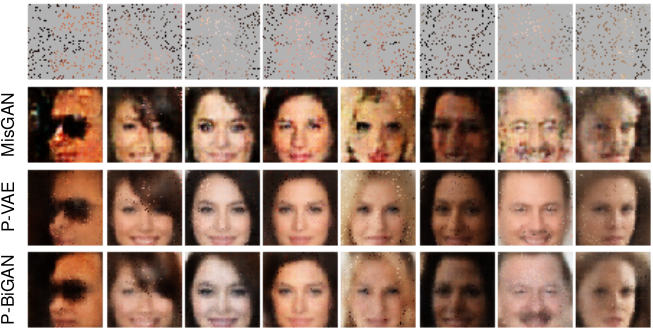}};
  \end{tikzpicture}
  \caption{Imputation results on MNIST and CelebA under 90\% missingness.
  The images in first row of each block are the incomplete images
  where gray pixels indicate missing data.
  For square observation cases on the left,
  the pixels inside of each red box are observed.}
  \label{fig:imputation}
\end{figure*}

In this section, we briefly describe two applications of our encoder-decoder
model framework: missing data imputation and supervised learning.

\subsection{Missing Data Imputation}
\label{sec:imputation}
Given an incomplete example $(\xx,\tt)$,
the goal of missing data imputation is to infer the values of the unobserved
features $\xx'$ that correspond
to indices $\tt'\subseteq\I\setminus\tt$ according to
$p(\xx'|\tt',\xx,\tt)$.
Once the model is trained,
imputations can be drawn according to the distribution
\[
  p(\xx'|\tt',\xx,\tt)
  = \E_{\zz\sim q_\phi(\zz|\xx,\tt)} \left[p_\theta(\xx'|\zz,\tt')\right].
\]
Since $p_\theta(\xx'|\zz,\tt')$ is defined implicitly by
\eqref{eq:gen complete},
sampling from $p(\xx'|\tt',\xx,\tt)$ can be done with the following steps:
\begin{align*}
  \zz \sim q_\phi(\zz|\xx,\tt),\quad
  f = g_\theta(\zz),\quad
  \xx' = [f(t_i')]_{i=1}^{|\tt'|}.
\end{align*}
%


\subsection{Supervised Learning}

We can perform supervised learning when each incomplete data case
has a corresponding prediction target.
We focus on the classification
case where the prediction target is a class label $y$.
We assume $y$ depends only on the
latent representation $\zz$ in the generative process \eqref{eq:gen complete}.


For P-VAE,
we augment the training objective to include the
classification term $p(y|\zz)$ as  follows:
\begin{align}
  &\E_{\zz\sim q_\phi(\zz|\xx,\tt)}\left[
    \log
    \frac{p_z(\zz) p(y|\zz) p_\theta(\xx|\zz,\tt)}
    {q_\phi(\zz|\xx,\tt)}
  \right]
  \label{eq:classification}
  \\
  &=
  \underbrace{\vphantom{\Bigg(}
    \E_{q_\phi(\zz|\xx,\tt)}\!\left[
      \log\frac{p_z(\zz) p_\theta(\xx|\zz,\tt)}{q_\phi(\zz|\xx,\tt)}
  \right]
  }_\text{regularization}
  +
  \underbrace{\vphantom{\Bigg(}
    \E_{q_\phi(\zz|\xx,\tt)}\!\left[\log p(y|\zz)\right]
  }_\text{classification}
  .
  \notag
\end{align}

Note that we use the encoder that depends only on the incomplete
data instead of the most general form $q_\phi(\zz|\xx,\tt,y)$,
which includes the class label as well.
This allows us to decompose \eqref{eq:classification} into
two separate terms: a regularization term as in P-VAE and
a classification term $\E_{q_\phi(\zz|\xx,\tt)}\!\left[\log p(y|\zz)\right]$.
Therefore, we can either train the classifier $p(y|\zz)$
along with the pre-trained encoder $q_\phi(\zz|\xx,\tt)$
or train the whole model jointly from scratch.
Moreover, this decomposition allows us to do semi-supervised learning
easily: we only include the classification term when the label
is available.

Similarly, for P-BiGAN,
we can train a classifier separately with the pre-trained encoder or
add a classification loss
$-\E_{q_\phi(\zz|\xx,\tt)}\!\left[\log p(y|\zz)\right]$
into \eqref{eq:p-bigan-ae} to jointly train the classifier with P-BiGAN.

Once the model is trained,
prediction can be performed efficiently
with the expectation approximated using a small number of samples
($S=1$ suffices in practice):
\begin{align*}
  y^*
  &=\argmax_y \E_{\zz\sim q_\phi(\zz|\xx,\tt)}\left[\log p(y|\zz)\right] \\
  &\approx\argmax_y \frac{1}{S}\sum_{s=1}^S \log p(y|\zz_s),
  \ \text{where $\zz_s \sim q_\phi(\zz|\xx,\tt)$}.
\end{align*}

\section{Experiments}
\label{sec:experiments}

In this section, we first evaluate the models on
the finite index set case described in
Section~\ref{sec:framework}.
We assess our framework using image modeling and completion experiments
with controlled missingness
on standard image benchmarks.
Next, we evaluate the performance of our framework equipped with
the continuous-time encoder/decoder
using the multivariate irregularly-sampled time series classification task
on a medical benchmark.
Additional results on time series imputation and visualization
of the learned temporal process on synthetic data
are provided in Appendix~\ref{sec:synthetic}.

\begin{figure*}[t!]
  \centering
  \includegraphics[width=\textwidth]{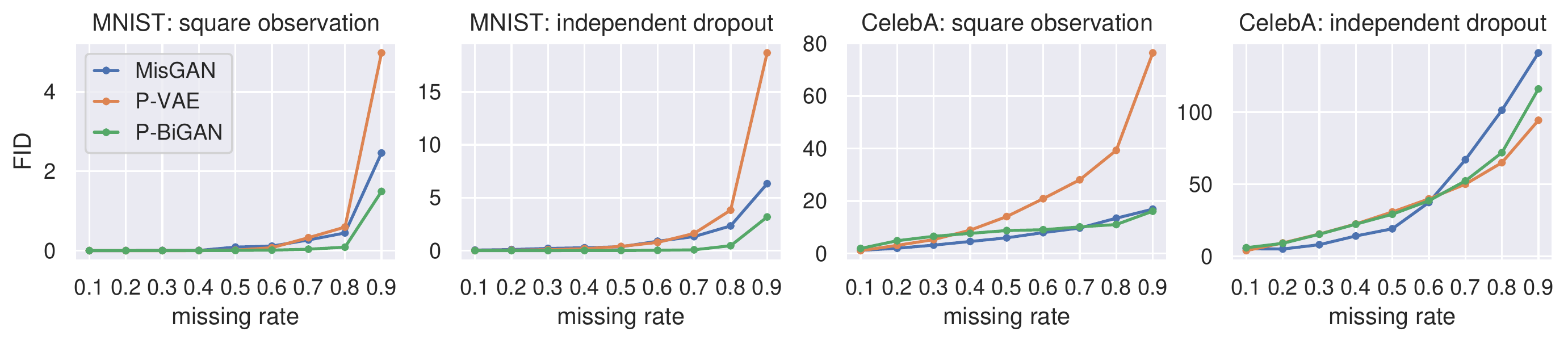}
  \vspace{-1.5em}
  \caption{Comparison of FIDs (the lower the better) on MNIST and CelebA with
  different missing patterns and missing rates.}
  \label{fig:fid}
\end{figure*}

\subsection{Image Modeling and Completion}
\label{sec:impute fid}

MisGAN was previously shown to outperform a range of methods
on the problem of learning the image distribution from incomplete data.
We follow the experimental setup of MisGAN
to quantitatively evaluate the performance of P-VAE and P-BiGAN on
the imputation task using two image benchmarks,
MNIST \citep{lecun2010mnist}
and CelebA \citep{liu2015faceattributes}.
We train the models using incomplete images
under two missing patterns:
i)~square observation where all pixels are missing
except for a square occurring at a random location on the image,
and ii)~independent dropout where each pixel is independently missing with
a given probability.
For both missing patterns,
we vary the missing rate from 10\% to 90\%.

To evaluate the quality of a model,
we impute all the incomplete images with the observed pixels
kept intact and use
the Fr\'echet Inception Distance (FID) \citep{heusel2017gans}
between the completed images
and the original fully-observed dataset as the evaluation metric.\footnote{
  Unlike FID that evaluates distributional discrepancy,
  metrics like RMSE that measure the
  discrepancy of imputation against the ground truth are not suitable here
  when the true posterior is highly
  multimodal, especially in the cases with high missingness.
  See \citet[Appendix~C]{li2018learning} for more details.
}

For P-VAE and P-BiGAN,
we use the same convolutional decoder architecture used in MisGAN.
For P-VAE,
we use an encoder $q_\phi(\zz|\xx,\tt)$ constructed by
\begin{equation}
  \zz_0 \sim \N(\mu_\phi(m(\xx,\tt)),\Sigma_\phi(m(\xx,\tt))),\
  \zz = \text{IAF}_\phi(\zz_0)
  \label{eq:iaf}
\end{equation}
using convolutional $\mu_\phi$ and $\Sigma_\phi$.
Here we use two-layers of inverse autoregressive flow (IAF)
\citep{kingma2016improved}.
In addition,
we use importance weighted autoencoders (IWAEs) with five importance weights.

For P-BiGAN, we use the same architecture as P-VAE including the IAF
component except we do not compute its density.
For the discriminator, we concatenate the embedding of $(\xx,\tt)$
computed using the same convolutional architecture
as the encoder and the embedding of $\zz$ using a two-layer MLP.
The concatenated embedding is then fed into another two-layer MLP
to produce the score.

Figure~\ref{fig:fid} compares the FIDs of MisGAN, P-VAE and P-BiGAN
under different missing patterns and missing rates.
For MNIST, it shows that P-BiGAN performs slightly better than MisGAN
due to the more expressive encoder architecture of P-BiGAN.
P-VAE has the worst FID scores especially for high missing rates,
which is reflected by the blurriness of the imputation results
shown in Figure~\ref{fig:imputation}.

For square observations on CelebA, P-BiGAN and MisGAN perform about the same,
while P-VAE has significantly worse FIDs also due to the blurriness.
However, for the independent dropout case,
P-VAE performs the best when the missing rate is high.
It seems that GAN-based models are better at capturing spatial correlations
when learning with convolutional networks, but
when neighboring pixels rarely co-occur, they are not able to learn effectively.
Because of the autoencoding regularization used in P-BiGAN,
it shares the benefit of autoencoding
when it comes to independent dropout and thus also outperforms
MisGAN when the missing rate is high.
However, for low missingness, MisGAN outperforms both P-VAE and P-BiGAN
due to its U-Net imputer that allows
the model to produce better imputation results when the images are
almost fully observed.







\begin{table}[t]
  \vspace{-.08in}
  \caption{
    The average per-epoch running time in minutes and the number of
    parameters of each model.
  }
  \vskip 0.12in
  \centering
  \begin{tabular}{clrr}
    \toprule
    dataset & method & time & params\\
    \midrule
    \multirow{3}{*}{MNIST}
    & MisGAN & 1.72 & 8.67M \\
    & P-VAE & 0.84 & 4.70M \\
    & P-BiGAN & 1.38 & 6.01M \\
    \midrule
    \multirow{3}{*}{CelebA}
    & MisGAN & 39.47 & 40.35M \\
    & P-VAE & 11.93 & 11.32M \\ 
    & P-BiGAN & 14.78 & 16.71M \\ 
    \bottomrule
  \end{tabular}
  \label{tab:images}
\end{table}

Table~\ref{tab:images} shows the per-epoch running time
and the number of parameters of each model,
where the running time is roughly proportional to the number of parameters.
For MNIST,
it shows that P-BiGAN and P-VAE have proportionally less parameters
then MisGAN, even if they both use a large encoder that
roughly doubles the parameters of the decoder.

For CelebA,
MisGAN uses a separate U-Net imputer trained with another discriminator,
while P-BiGAN only utilizes an additional encoder to impute along with
the decoder.
Moreover,
P-BiGAN does not model the missingness that requires
an extra pair of generator and discriminator for the masks as in MisGAN.
Therefore, the great reduction in model parameters makes
P-BiGAN about 2.7 times faster than MisGAN.
On the other hand,
P-VAE enjoys the simplest training procedure and the lowest model complexity
without the need for learning separate discriminators.
As a result, it is the fastest among the three models.



\subsection{Classification of Irregularly-Sampled Time Series}
\label{sec:mimic3}

In this section, we evaluate our framework on a healthcare multivariate
time series dataset,
MIMIC-III \citep{mimiciii},
using the mortality prediction task.

MIMIC-III consists of about 53,000 data cases. We use 12
irregularly-sampled temporal variables that are recorded within 48 hours.
If we discretize observations into 1-minute intervals,
the overall missing rate is about 92\% on average.
We rescale the timestamps within the 48-hour window to $[0, 1]$.
Our task is to predict the in-hospital mortality as a binary classification
problem.
We use the area under the ROC curve (AUC) as the evaluation metric.
We split the data into 64\% for training,
16\% for validation, and the remaining 20\% for testing.

We evaluate both P-VAE and P-BiGAN equipped with the continuous encoder and
decoder described in Section~\ref{sec:cont time},
which we denote Cont P-VAE and Cont P-BiGAN respectively.
For the decoder, we use 128 evenly-spaced references over
$[0,1]$ for the kernel smoother,
whose values are the output of a standard CNN decoder.
We use the Epanechnikov kernel with the kernel bandwidth set to
3/128.
For the continuous convolutional layer in the encoder,
we use 64 output channels with 98 evenly-spaced references.
The piecewise-linear convolutional kernel has width 2/98 with
7 knots. The output of the continuous convolutional layer is then
fed into a standard IAF encoder as in \eqref{eq:iaf}.

For Cont P-VAE we use 8 importance weights for the IWAE objective.
Both P-VAE and P-BiGAN are trained with a separate
two-layer fully-connected classifier jointly.
As an ablation study, we compare our models with a classifier,
denoted Cont Classifier,
that combines the same encoder and classifier used in Cont P-VAE and
Cont P-BiGAN, but without adding extra Gaussian noise in the encoder.
We compare our models with two recent methods designed for
irregularly-sampled time series:
GRU-D \citep{che2018recurrent} and Latent ODE \citep{rubanova2019latent}.
We also include a baseline model for
learning time series with missing data,
Multi-directional RNN (M-RNN) \citep{yoon2018estimating}.
Since M-RNN does not work well with massive missingness,
it is run on the modified data with
observations quantized into 30-minute intervals.

\begin{table}
  \vspace{-.08in}
  \caption{AUC (mean $\pm$ std), per-epoch time in hours and
  the number of parameters of each model on MIMIC-III.}
  \vskip 0.12in
  \centering
    \begin{tabular}{lccr}
      \toprule
      method & AUC (\%) & time & params \\
      \midrule
      M-RNN & 83.87 $\pm$ 0.80 & -- & 101.6K \\
      GRU-D & 83.88 $\pm$ 0.65 & 0.11 & 2.6K \\
      Latent ODE & {85.71 $\pm$ 0.38} & 2.62 & 154.7K \\
      \midrule
      Cont Classifier & 84.87 $\pm$ 0.18 & 0.03 & 30.5K \\
      Cont P-VAE & 85.52 $\pm$ 0.54 & 0.05 & 67.8K \\
      Cont P-BiGAN & {86.05 $\pm$ 0.36} & 0.22 & 73.2K \\
      \bottomrule
    \end{tabular}
    \label{tab:mimic3}
\end{table}

Table~\ref{tab:mimic3} shows predictive performance,
per-epoch training time and model sizes.
The training time of M-RNN is omitted because it runs on the
much smaller quantized dataset and the time is thus not
comparable to other methods.
The table shows that Cont P-BiGAN achieves the highest mean AUC
of all of the methods,
followed closely by Latent ODE and Cont P-VAE.
Although the difference between Cont P-BiGAN and Latent ODE
is not statistically significant,
Cont P-BiGAN is over 10 times faster per training epoch.
On the other hand, Cont P-VAE is over 50 times faster than Latent ODE.

These run time differences are due to the fact that
the convolutional architectures used
in the proposed approaches are highly parallelizable
compared to the recurrent structure used by the baseline models.
Moreover, our models directly parameterize temporal functions using
\eqref{eq:cont decoder};
on the contrary, Latent ODE instead models the dynamics using ODEs,
which requires expensive numerical integration.
Meanwhile, Cont P-VAE is faster than Cont P-BiGAN
because Cont P-BiGAN requires
running continuous convolutional layers
in both the encoder and discriminator,
which is the most expensive computation during training
that marshals time series of variable size.

We also note that Cont Classifier achieves better AUCs
than M-RNN and GRU-D without generative modeling.
This shows that the continuous convolutional layer provides
an effective intermediate representation
for irregularly-sampled time series.

\section{Conclusion}
\label{sec:discussion}

In this paper, we have presented the problem of modeling
irregularly-sampled time series from the perspective of missing data.
We introduced an encoder-decoder framework
for modeling general missing data problems
and introduced two model families leveraging this framework:
P-VAE and P-BiGAN.
We showed how to integrate this framework with
a continuous convolutional layer
to efficiently featurize irregularly-sampled time series
for interfacing with standard neural network architectures.
Our proposed models achieve comparable predictive performance
to the recently-proposed Latent ODE model,
while offering significantly faster training times.

\bibliography{ref}

\begin{thebibliography}{29}
\providecommand{\natexlab}[1]{#1}
\providecommand{\url}[1]{\texttt{#1}}
\expandafter\ifx\csname urlstyle\endcsname\relax
  \providecommand{\doi}[1]{doi: #1}\else
  \providecommand{\doi}{doi: \begingroup \urlstyle{rm}\Url}\fi

\bibitem[Burda et~al.(2016)Burda, Grosse, and
  Salakhutdinov]{burda2015importance}
Burda, Y., Grosse, R., and Salakhutdinov, R.
\newblock Importance weighted autoencoders.
\newblock In \emph{International Conference on Learning Representations
  (ICLR)}, 2016.

\bibitem[Che et~al.(2018)Che, Purushotham, Cho, Sontag, and
  Liu]{che2018recurrent}
Che, Z., Purushotham, S., Cho, K., Sontag, D., and Liu, Y.
\newblock Recurrent neural networks for multivariate time series with missing
  values.
\newblock \emph{Scientific reports}, 8\penalty0 (1):\penalty0 6085, 2018.

\bibitem[Chen et~al.(2018)Chen, Rubanova, Bettencourt, and
  Duvenaud]{chen2018neural}
Chen, T.~Q., Rubanova, Y., Bettencourt, J., and Duvenaud, D.~K.
\newblock Neural ordinary differential equations.
\newblock In \emph{Advances in Neural Information Processing Systems}, pp.\
  6571--6583, 2018.

\bibitem[Chung et~al.(2015)Chung, Kastner, Dinh, Goel, Courville, and
  Bengio]{chung2015recurrent}
Chung, J., Kastner, K., Dinh, L., Goel, K., Courville, A.~C., and Bengio, Y.
\newblock A recurrent latent variable model for sequential data.
\newblock In \emph{Advances in Neural Information Processing Systems}, pp.\
  2980--2988, 2015.

\bibitem[Donahue \& Simonyan(2019)Donahue and Simonyan]{donahue2019large}
Donahue, J. and Simonyan, K.
\newblock Large scale adversarial representation learning.
\newblock In \emph{Advances in Neural Information Processing Systems}, pp.\
  10542--10552, 2019.

\bibitem[Donahue et~al.(2017)Donahue, Kr\"ahenb\"uhl, and
  Darrell]{donahue2016bigan}
Donahue, J., Kr\"ahenb\"uhl, P., and Darrell, T.
\newblock Adversarial feature learning.
\newblock In \emph{International Conference on Learning Representations
  (ICLR)}, 2017.

\bibitem[Dumoulin et~al.(2017)Dumoulin, Belghazi, Poole, Lamb, Arjovsky,
  Mastropietro, and Courville]{dumoulin2017adversarially}
Dumoulin, V., Belghazi, M. I.~D., Poole, B., Lamb, A., Arjovsky, M.,
  Mastropietro, O., and Courville, A.
\newblock Adversarially learned inference.
\newblock In \emph{International Conference on Learning Representations
  (ICLR)}, 2017.

\bibitem[Fey et~al.(2018)Fey, Lenssen, Weichert, and M{\"u}ller]{Fey/etal/2018}
Fey, M., Lenssen, J.~E., Weichert, F., and M{\"u}ller, H.
\newblock {SplineCNN}: Fast geometric deep learning with continuous {B}-spline
  kernels.
\newblock In \emph{IEEE Conference on Computer Vision and Pattern Recognition
  (CVPR)}, 2018.

\bibitem[Garnelo et~al.(2018{\natexlab{a}})Garnelo, Rosenbaum, Maddison,
  Ramalho, Saxton, Shanahan, Teh, Rezende, and Eslami]{garnelo2018conditional}
Garnelo, M., Rosenbaum, D., Maddison, C., Ramalho, T., Saxton, D., Shanahan,
  M., Teh, Y.~W., Rezende, D., and Eslami, S.~A.
\newblock Conditional neural processes.
\newblock In \emph{International Conference on Machine Learning (ICML)}, pp.\
  1690--1699, 2018{\natexlab{a}}.

\bibitem[Garnelo et~al.(2018{\natexlab{b}})Garnelo, Schwarz, Rosenbaum, Viola,
  Rezende, Eslami, and Teh]{garnelo2018neural}
Garnelo, M., Schwarz, J., Rosenbaum, D., Viola, F., Rezende, D.~J., Eslami, S.,
  and Teh, Y.~W.
\newblock Neural processes.
\newblock \emph{arXiv preprint arXiv:1807.01622}, 2018{\natexlab{b}}.

\bibitem[Goodfellow et~al.(2014)Goodfellow, Pouget-Abadie, Mirza, Xu,
  Warde-Farley, Ozair, Courville, and Bengio]{goodfellow2014generative}
Goodfellow, I., Pouget-Abadie, J., Mirza, M., Xu, B., Warde-Farley, D., Ozair,
  S., Courville, A., and Bengio, Y.
\newblock Generative adversarial nets.
\newblock In \emph{Advances in Neural Information Processing Systems}, pp.\
  2672--2680, 2014.

\bibitem[Heusel et~al.(2017)Heusel, Ramsauer, Unterthiner, Nessler, and
  Hochreiter]{heusel2017gans}
Heusel, M., Ramsauer, H., Unterthiner, T., Nessler, B., and Hochreiter, S.
\newblock {GANs} trained by a two time-scale update rule converge to a local
  nash equilibrium.
\newblock In \emph{Advances in Neural Information Processing Systems}, pp.\
  6629--6640, 2017.

\bibitem[Johnson et~al.(2016)Johnson, Pollard, Shen, Li-wei, Feng, Ghassemi,
  Moody, Szolovits, Celi, and Mark]{mimiciii}
Johnson, A.~E., Pollard, T.~J., Shen, L., Li-wei, H.~L., Feng, M., Ghassemi,
  M., Moody, B., Szolovits, P., Celi, L.~A., and Mark, R.~G.
\newblock {MIMIC-III}, a freely accessible critical care database.
\newblock \emph{Scientific data}, 3:\penalty0 160035, 2016.

\bibitem[Kim et~al.(2019)Kim, Mnih, Schwarz, Garnelo, Eslami, Rosenbaum,
  Vinyals, and Teh]{kim2019attentive}
Kim, H., Mnih, A., Schwarz, J., Garnelo, M., Eslami, A., Rosenbaum, D.,
  Vinyals, O., and Teh, Y.~W.
\newblock Attentive neural processes.
\newblock In \emph{International Conference on Learning Representations
  (ICLR)}, 2019.

\bibitem[Kingma \& Welling(2014)Kingma and Welling]{kingma2013auto}
Kingma, D.~P. and Welling, M.
\newblock Auto-encoding variational bayes.
\newblock In \emph{Proceedings of the 2nd International Conference on Learning
  Representations (ICLR)}, 2014.

\bibitem[Kingma et~al.(2016)Kingma, Salimans, Jozefowicz, Chen, Sutskever, and
  Welling]{kingma2016improved}
Kingma, D.~P., Salimans, T., Jozefowicz, R., Chen, X., Sutskever, I., and
  Welling, M.
\newblock Improved variational inference with inverse autoregressive flow.
\newblock In \emph{Advances in Neural Information Processing Systems}, pp.\
  4743--4751, 2016.

\bibitem[LeCun et~al.(2010)LeCun, Cortes, and Burges]{lecun2010mnist}
LeCun, Y., Cortes, C., and Burges, C.
\newblock Mnist handwritten digit database.
\newblock \emph{ATT Labs [Online]. Available:
  \url{http://yann.lecun.com/exdb/mnist}}, 2, 2010.

\bibitem[Lee et~al.(2019)Lee, Lee, Kim, Kosiorek, Choi, and Teh]{lee2019set}
Lee, J., Lee, Y., Kim, J., Kosiorek, A., Choi, S., and Teh, Y.~W.
\newblock Set transformer: A framework for attention-based
  permutation-invariant neural networks.
\newblock In \emph{International Conference on Machine Learning (ICML)}, pp.\
  3744--3753, 2019.

\bibitem[Li et~al.(2019)Li, Jiang, and Marlin]{li2018learning}
Li, S. C.-X., Jiang, B., and Marlin, B.
\newblock {M}is{GAN}: Learning from incomplete data with generative adversarial
  networks.
\newblock In \emph{International Conference on Learning Representations
  (ICLR)}, 2019.

\bibitem[Little \& Rubin(2014)Little and Rubin]{little2014statistical}
Little, R.~J. and Rubin, D.~B.
\newblock \emph{Statistical analysis with missing data}, volume 333.
\newblock John Wiley \& Sons, 2014.

\bibitem[Liu et~al.(2015)Liu, Luo, Wang, and Tang]{liu2015faceattributes}
Liu, Z., Luo, P., Wang, X., and Tang, X.
\newblock Deep learning face attributes in the wild.
\newblock In \emph{Proceedings of International Conference on Computer Vision
  (ICCV)}, December 2015.

\bibitem[Ma et~al.(2018)Ma, Gong, Hern{\'a}ndez-Lobato, Koenigstein, Nowozin,
  and Zhang]{ma2018partial}
Ma, C., Gong, W., Hern{\'a}ndez-Lobato, J.~M., Koenigstein, N., Nowozin, S.,
  and Zhang, C.
\newblock Partial {VAE} for hybrid recommender system.
\newblock In \emph{NIPS Workshop on Bayesian Deep Learning. 2018}, 2018.

\bibitem[Ma et~al.(2019)Ma, Tschiatschek, Palla, Hernandez-Lobato, Nowozin, and
  Zhang]{ma2019eddi}
Ma, C., Tschiatschek, S., Palla, K., Hernandez-Lobato, J.~M., Nowozin, S., and
  Zhang, C.
\newblock {EDDI}: Efficient dynamic discovery of high-value information with
  partial {VAE}.
\newblock In \emph{International Conference on Machine Learning (ICML)}, pp.\
  4234--4243, 2019.

\bibitem[Mattei \& Frellsen(2019)Mattei and Frellsen]{mattei2019miwae}
Mattei, P.-A. and Frellsen, J.
\newblock {MIWAE}: Deep generative modelling and imputation of incomplete data
  sets.
\newblock In \emph{International Conference on Machine Learning (ICML)}, pp.\
  4413--4423, 2019.

\bibitem[Piegl \& Tiller(2012)Piegl and Tiller]{piegl2012nurbs}
Piegl, L. and Tiller, W.
\newblock \emph{The NURBS book}.
\newblock Springer Science \& Business Media, 2012.

\bibitem[Rubanova et~al.(2019)Rubanova, Chen, and Duvenaud]{rubanova2019latent}
Rubanova, Y., Chen, T.~Q., and Duvenaud, D.~K.
\newblock Latent ordinary differential equations for irregularly-sampled time
  series.
\newblock In \emph{Advances in Neural Information Processing Systems}, pp.\
  5321--5331, 2019.

\bibitem[Yin \& Zhou(2018)Yin and Zhou]{yin2018semi}
Yin, M. and Zhou, M.
\newblock Semi-implicit variational inference.
\newblock In \emph{International Conference on Machine Learning (ICML)}, pp.\
  5646--5655, 2018.

\bibitem[Yoon et~al.(2018)Yoon, Zame, and van~der Schaar]{yoon2018estimating}
Yoon, J., Zame, W.~R., and van~der Schaar, M.
\newblock Estimating missing data in temporal data streams using
  multi-directional recurrent neural networks.
\newblock \emph{IEEE Transactions on Biomedical Engineering}, 66\penalty0
  (5):\penalty0 1477--1490, 2018.

\bibitem[Zaheer et~al.(2017)Zaheer, Kottur, Ravanbakhsh, Poczos, Salakhutdinov,
  and Smola]{zaheer2017deep}
Zaheer, M., Kottur, S., Ravanbakhsh, S., Poczos, B., Salakhutdinov, R.~R., and
  Smola, A.~J.
\newblock Deep sets.
\newblock In \emph{Advances in Neural Information Processing Systems}, pp.\
  3391--3401, 2017.

\end{thebibliography}
\bibliographystyle{icml2020}

\newpage
\clearpage
\appendix
\label{appendix}
\section{Proof of Proposition~\ref{prop:inverse}}
\label{sec:proof}

\setcounter{proposition}{1}

\begin{proposition}
  
\end{proposition}

\begin{proof}
  The joint distribution induced by the encoder is
  \begin{equation*}
    p_\text{enc}(\xx,\tt,\zz) = p_\D(\xx,\tt)q_\phi(\zz|\xx,\tt).
  \end{equation*}
  The joint distribution induced by the decoder is
  \begin{equation*}
    p_\text{dec}(\xx,\tt,\zz) = p_\I(\tt)p_z(\zz)\delta(\xx-g_\theta(\zz,\tt)).
  \end{equation*}
  When the optimality is achieved so that
  $p_\text{enc}=p_\text{dec}$, for $p_\D(\xx,\tt) > 0$ we have
  \begin{equation*}
    q_\phi(\zz|\xx,\tt) =
    \frac{p_\I(\tt)p_z(\zz)}{p_\D(\xx,\tt)}\delta(\xx-g_\theta(\zz,\tt)).
  \end{equation*}
  Therefore, given $(\xx,\tt)$ such that $p_\D(\xx,\tt) > 0$,
  for $Z\sim q_\phi(\zz|\xx,\tt)$ we have
  \begin{align*}
    \operatorname{Pr}[\xx=g_\theta(Z,\tt)]
    &=\int\1\{\xx=g_\theta(\zz,\tt)\}q_\phi(\zz|\xx,\tt)d\zz \\
    &=\int q_\phi(\zz|\xx,\tt)d\zz \\
    &=1.
    \qedhere
  \end{align*}
\end{proof}

\section{On the Independence Assumption}
\label{sec:independence}

Throughout this paper, we assume
the complete temporal process $f$ and the observation indices $\tt$
are independent, which corresponds to the
missing completely at random (MCAR) case categorized by
\citet{little2014statistical}.
We point out that P-VAE is still unbiased if the data are
missing at random (MAR) according to \citet[Chapter~6]{little2014statistical}.

We note that the introduction of the independence assumption
is mainly for better modeling scalability and stability.
For the most general situation that corresponds to
the not missing at random (NMAR) case,
we will need to model the dependent index distribution
explicitly in both P-VAE and P-BiGAN.
One convenient choice is to model this distribution as $p_\I(\tt|\zz)$
that conditions on the common latent code $\zz$ shared with the data $\xx$,
which results in the following generative process:
\[
  \zz \sim p_z(\zz), \quad
  \tt \sim p_\I(\tt|\zz), \quad
  \xx = g_\theta(\zz,\tt).
\]
This encodes the dependency between $\tt$ and $\xx$ when $\zz$ is unobserved.
For P-VAE, we maximize the following expected variational lower bound on
$\log p(\xx,\tt)$ with additional model parameters for $p_\I(\tt|\zz)$:
\begin{equation*}
  \E_{(\xx,\tt)\sim p_\D}
  \E_{q_\phi(\zz|\xx,\tt)}\!\left[
    \log \frac{p_z(\zz) p_\I(\tt|\zz)\prod_{i=1}^{|\tt|} p_\theta(x_i|\zz,t_i)}
  {q_\phi(\zz|\xx,\tt)} \right].
\end{equation*}
For P-BiGAN, the minimax game becomes
\begin{align*}
  &\min_{\theta,\phi,\tau}\max_D
  \Big(
  \E_{(\xx,\tt)\sim p_\D}\E_{\zz\sim p_\phi(\zz|\xx,\tt)}\left[
      \log D(\xx,\tt,\zz)
  \right] \\
  &
  \qquad
  +\E_{\zz\sim p_z(\zz)}\E_{\tt\sim p_\I(\tt|\zz)}\left[
    \log (1 - D(g_\theta(\zz,\tt),\tt,\zz))
  \right]
  \Big)
\end{align*}
where $\tau$ denotes the parameters of $p_\I(\tt|\zz)$.
For P-BiGAN,
$p_\I(\tt|\zz)$ can be either stochastic or deterministic.

For time series,
we can use the variational RNN (VRNN)
\citep{chung2015recurrent} to model the temporal point process
$p_\I(\tt|\zz)$.
Specifically, at each step of VRNN that corresponds to an observation,
it outputs the duration until the next observation is made.
Our preliminary results show that incorporating VRNN $p_\I(\tt|\zz)$
makes learning the data distribution harder, especially for P-BiGAN
as the discriminator is sensitive to the discrepancy between
the learned temporal point process and the empirical samples of
observation times.
Specifically,
modeling the dependency of the temporal point process reduces bias while
significantly increasing variance such that the overall model ends up
performing worse.
The same phenomenon was also reported in the Latent ODE
work---\citet{rubanova2019latent}
jointly model a Poisson process using a Neural ODE,
which also leads to worse classification results.

Moreover, learning the temporal point process using variational RNN
is quite slow due to the sequential nature of RNNs.
It is challenging to model such distribution efficiently
given that the number of observations may be varied from case to case,
especially for P-BiGAN
that needs to discriminate samples of variable lengths.
Therefore,
studying how to effectively and efficiently learn
the temporal point process
and incorporate it in the missing data setting for time series
is of interest in the future.

%

\section{Autoencoding Regularization in P-BiGAN}
\label{sec:ae reg}

In Section~\ref{sec:pbigan} we discussed regularizing P-BiGAN
with an autoencoding loss using the augmented objective \eqref{eq:p-bigan-ae}.
Here we demonstrate the effect of introducing this autoencoding loss
in P-BiGAN by comparing the augmented model with
the non-regularized counterpart,
which is equivalent to the model with
the autoencoding coefficient $\lambda=0$.

Figure~\ref{fig:fid ae} compares P-BiGAN with the
default strictly-positive $\lambda$
and the one without autoencoding regularization using $\lambda=0$
on the MNIST and CelebA imputation experiments.
Similarly,
Table~\ref{tab:ae} compares P-BiGAN with the default $\lambda=1$
and the one without the autoencoding term
on the MIMIC-III experiment.
It shows that autoencoding regularization improves
the performance in almost all the cases.
Nonetheless, even without autoencoding regularization P-BiGAN
still gives reasonable imputation and classification results.
This provides empirical evidence to support the invertibility property
stated in Proposition~\ref{prop:inverse}.

\begin{figure}[hbt]
  \centering
  \hspace*{-1.5em}
  \includegraphics[width=.52\textwidth]{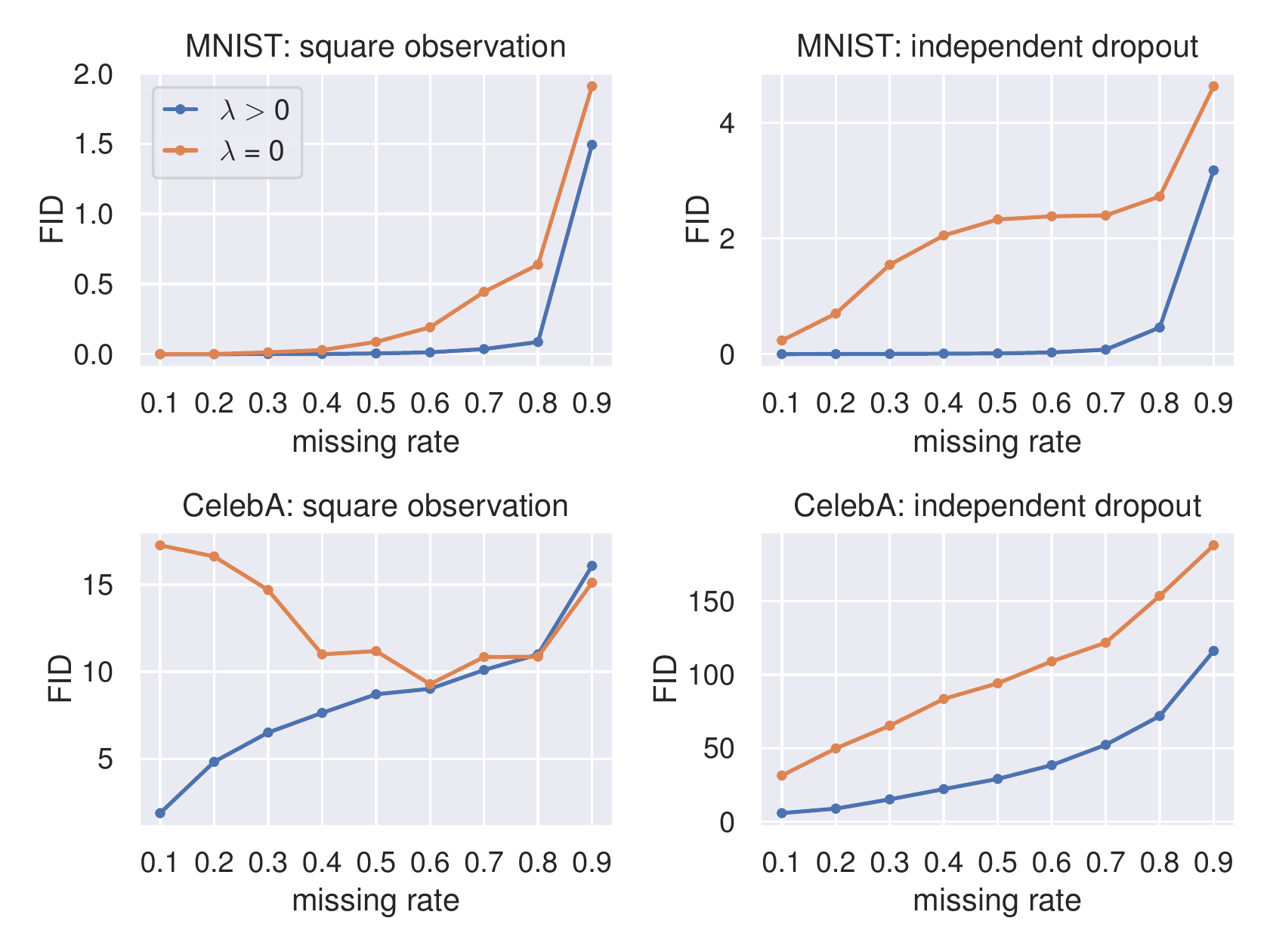}
  \vspace{-1.5em}
  \caption{Comparing the effect of autoencoding regularization on
  the imputation FIDs of P-BiGAN on MNIST and CelebA
  (no autoencoding regularization when $\lambda=0$).
  The high FIDs of the cases of low missing rates on
  CelebA with square observation are due to the inconsistency between
  the observed region and the imputed part.
  Figure~\ref{fig:fid gen ae} shows the FIDs of the generated images
  under the same settings,
  from which we can see that the decoder of P-BiGAN performs roughly
  the same regardless of the autoencoding regularization.
  }
  \label{fig:fid ae}
\end{figure}

\begin{figure}[hbt]
  \centering
  \hspace*{-1.5em}
  \includegraphics[width=.52\textwidth]{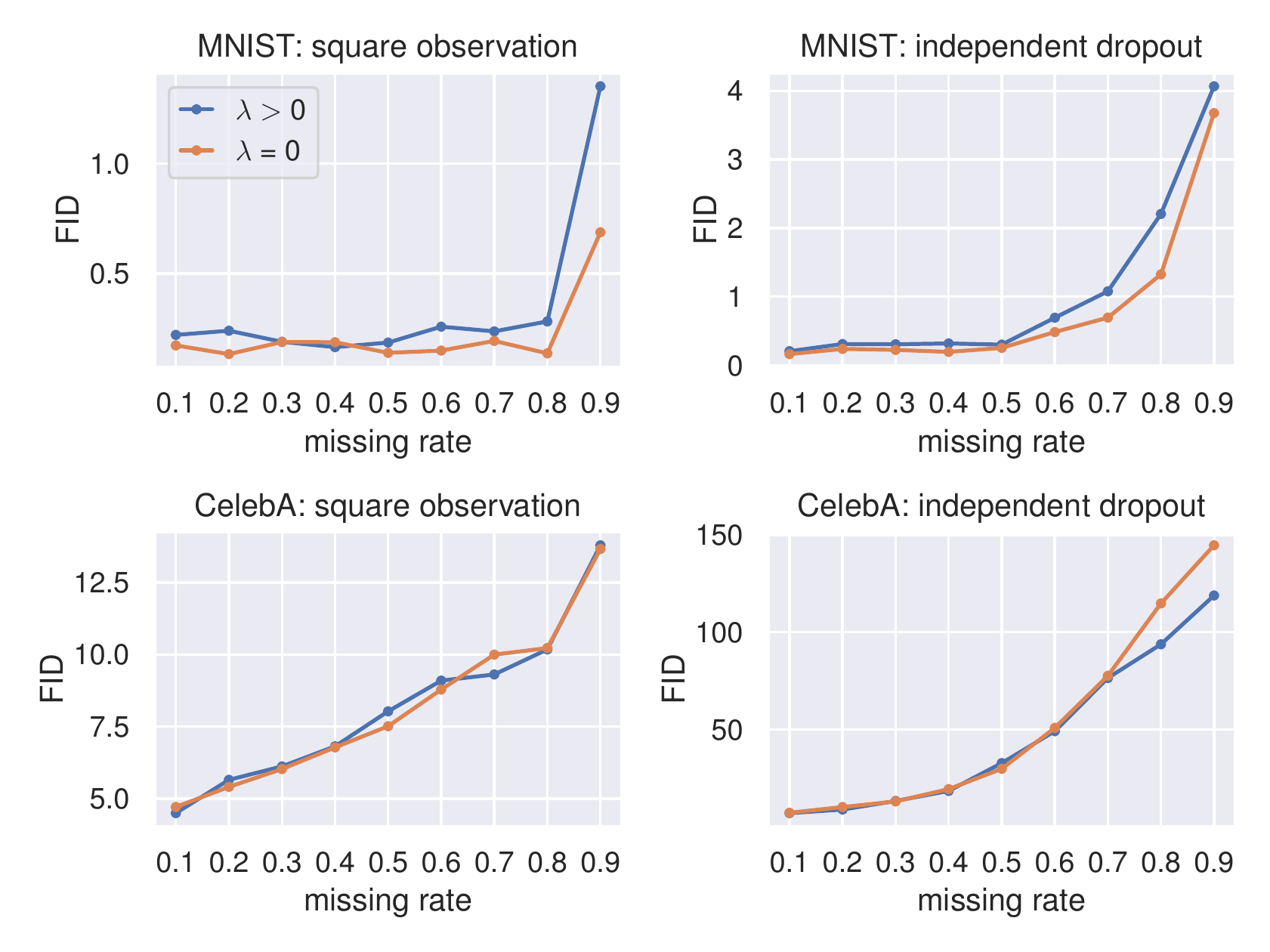}
  \vspace{-1.5em}
  \caption{Comparing the effect of autoencoding regularization on
  the generation FIDs of P-BiGAN on MNIST and CelebA
  (no autoencoding loss when $\lambda=0$).}
  \label{fig:fid gen ae}
\end{figure}

\begin{table}[t]
  \caption{
    Comparing P-BiGAN with autoencoding regularization ($\lambda=1$)
    and without it ($\lambda=0$) on MIMIC-III classification.
  }
  \vskip 0.12in
  \centering
  \begin{tabular}{cc}
    \toprule
    AE $\lambda$ & AUC (\%) \\
    \midrule
    $\lambda=0$ & 83.56 $\pm$ 0.49 \\
    $\lambda=1$ & 86.05 $\pm$ 0.36 \\
    \bottomrule
  \end{tabular}
  \label{tab:ae}
\end{table}

\section{Synthetic Multivariate Time Series}
\label{sec:synthetic}

\begin{figure*}
  \centering
  \def\vgap{.6em}
  \def\hgap{0em}
  \def\toydir{synthetic}
  \def\imgw{.24\textwidth}
  \begin{tikzpicture}
    \node[inner sep=0pt,
      label={[rotate=90,anchor=south]left:{\scriptsize\textsf{Data}}}
    ] (r11) {
      \includegraphics[width=\imgw]{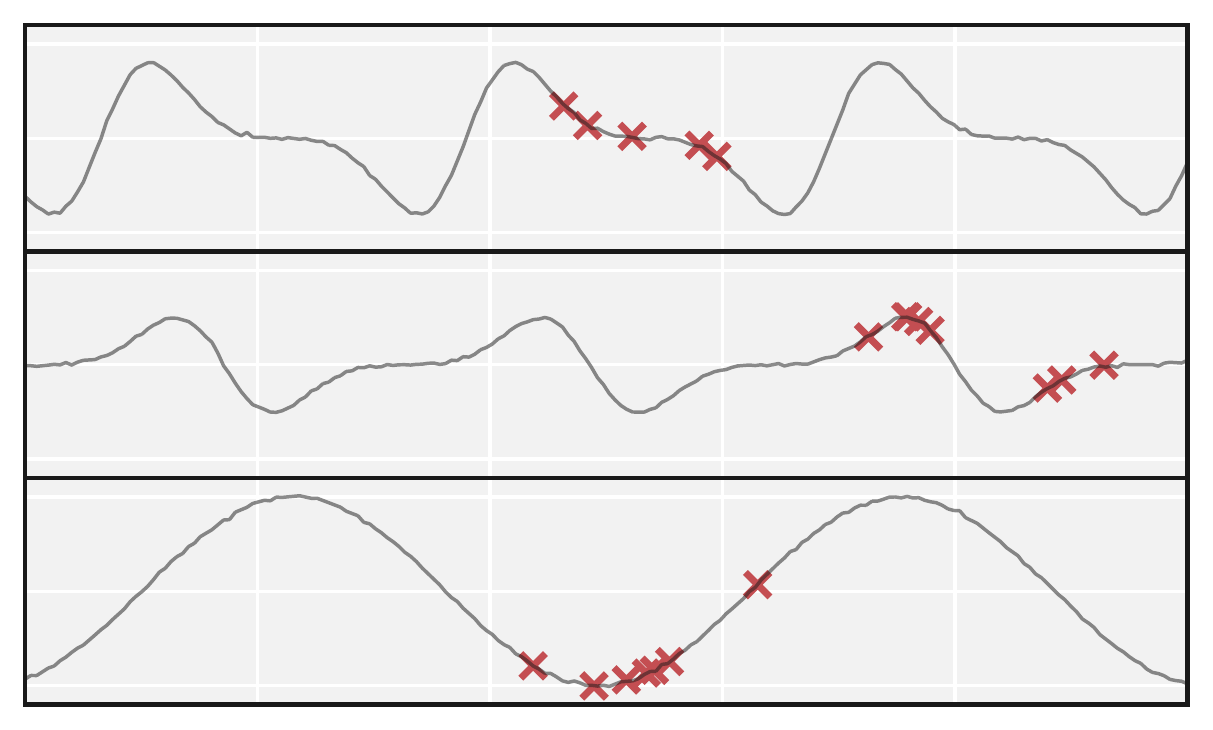}
    };
    \node[inner sep=0pt,right=\hgap of r11,
    ] (r12) {
      \includegraphics[width=\imgw]{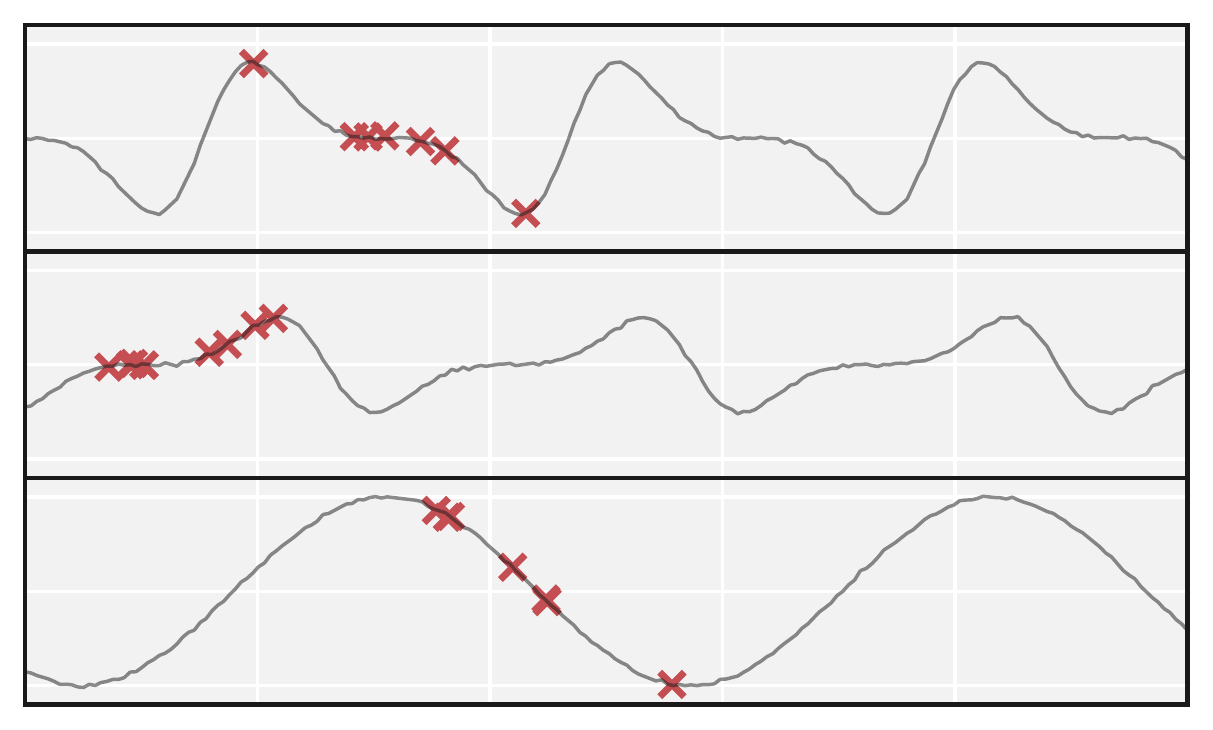}
    };
    \node[inner sep=0pt,right=\hgap of r12,
    ] (r13) {
      \includegraphics[width=\imgw]{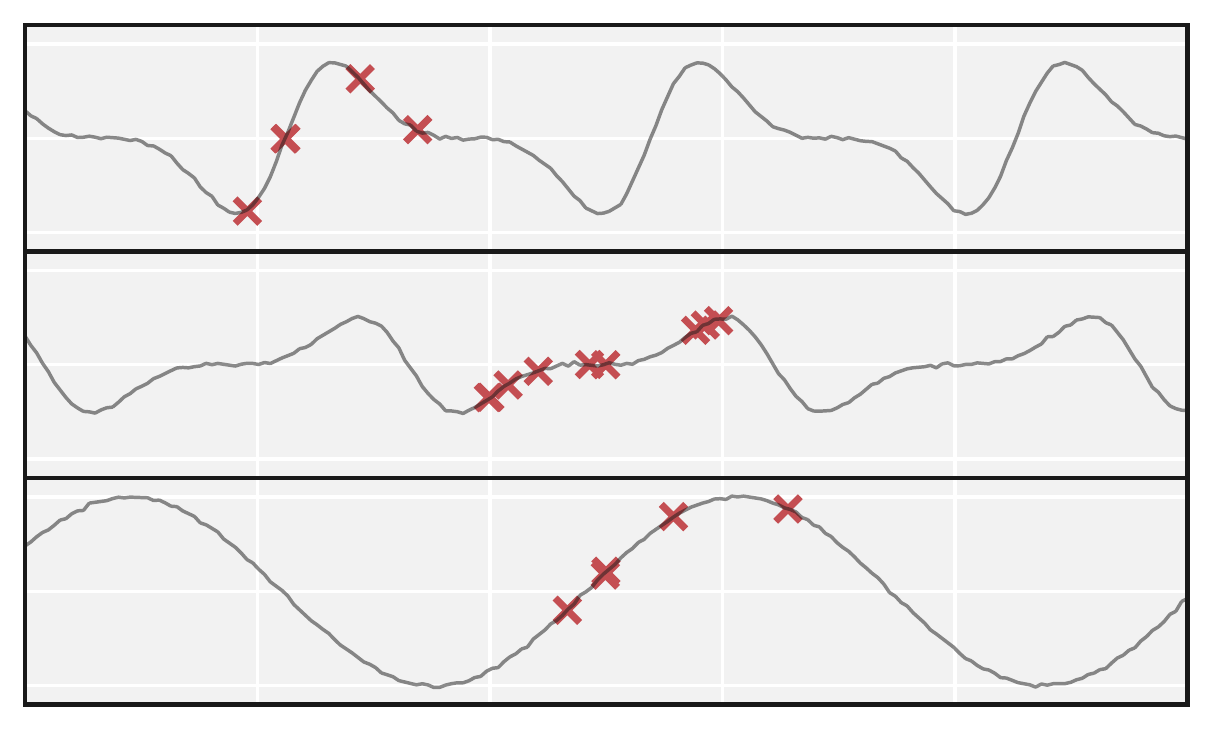}
    };
    \node[inner sep=0pt,right=\hgap of r13,
    ] (r14) {
      \includegraphics[width=\imgw]{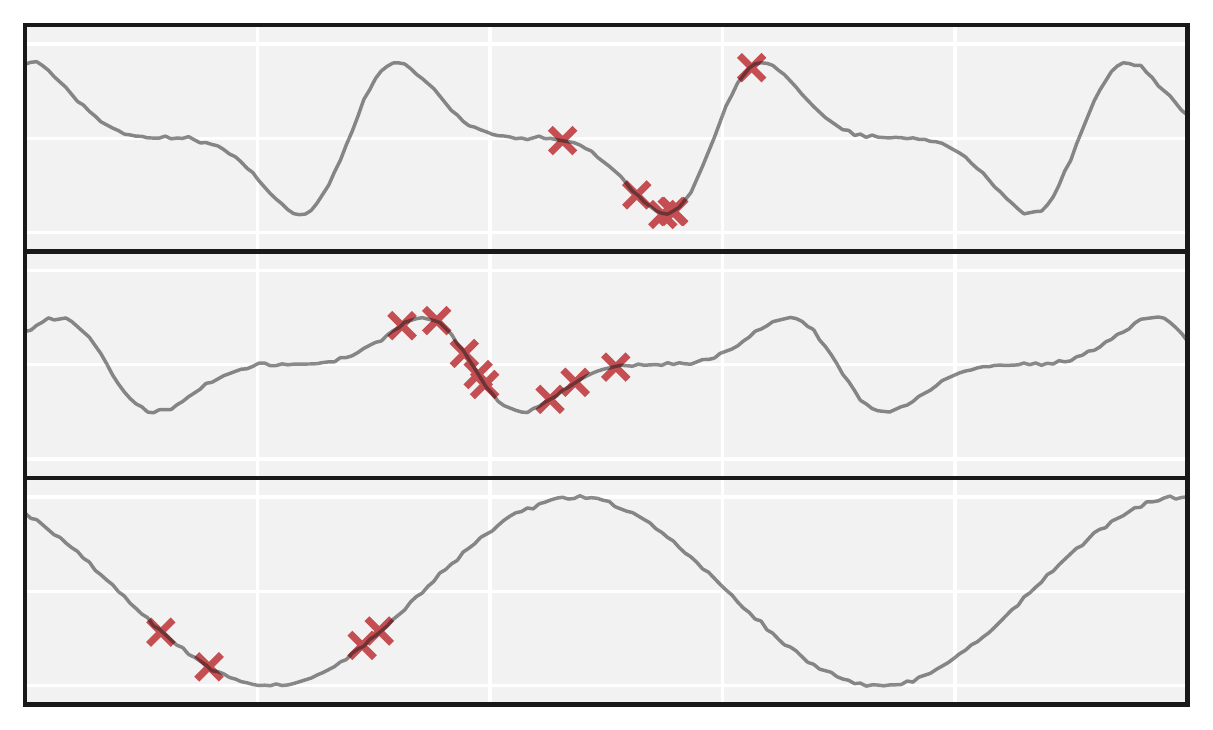}
    };
    \node[inner sep=0pt,below=\vgap of r11,
      label={[rotate=90,anchor=south]left:{\scriptsize\textsf{Cont P-VAE}}}
    ] (r21) {
      \includegraphics[width=\imgw]{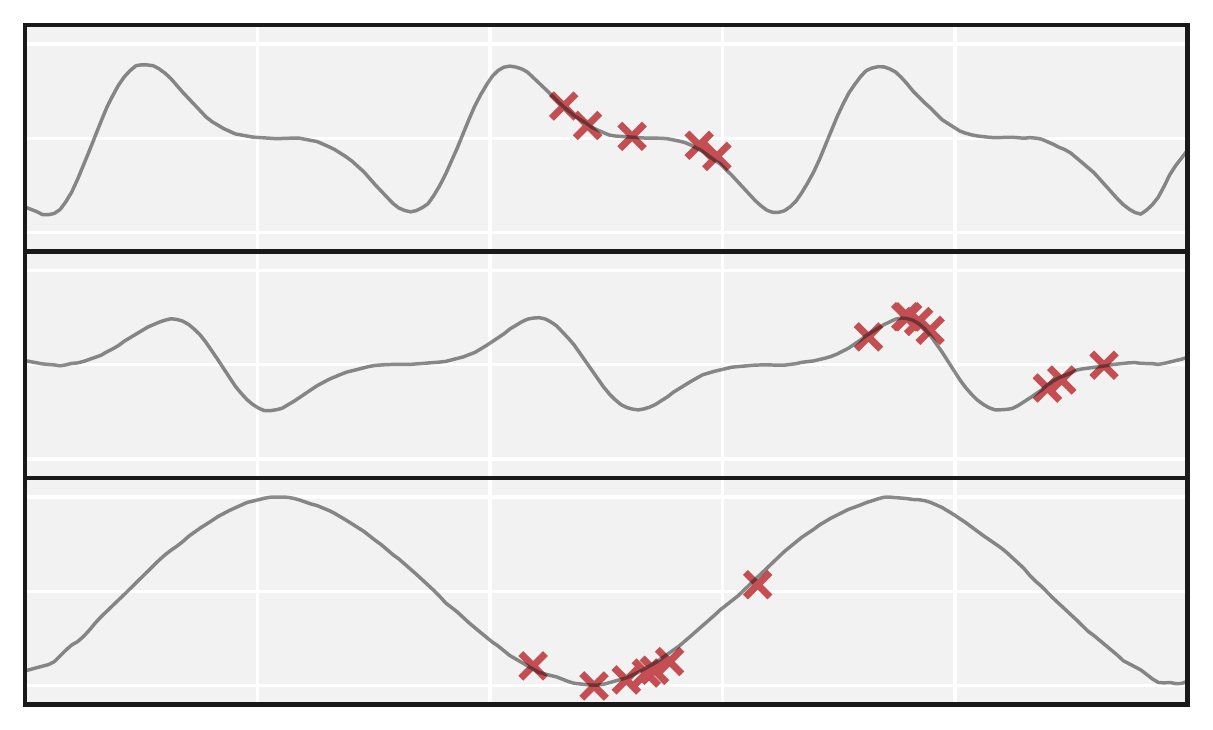}
    };
    \node[inner sep=0pt,right=\hgap of r21,
    ] (r22) {
      \includegraphics[width=\imgw]{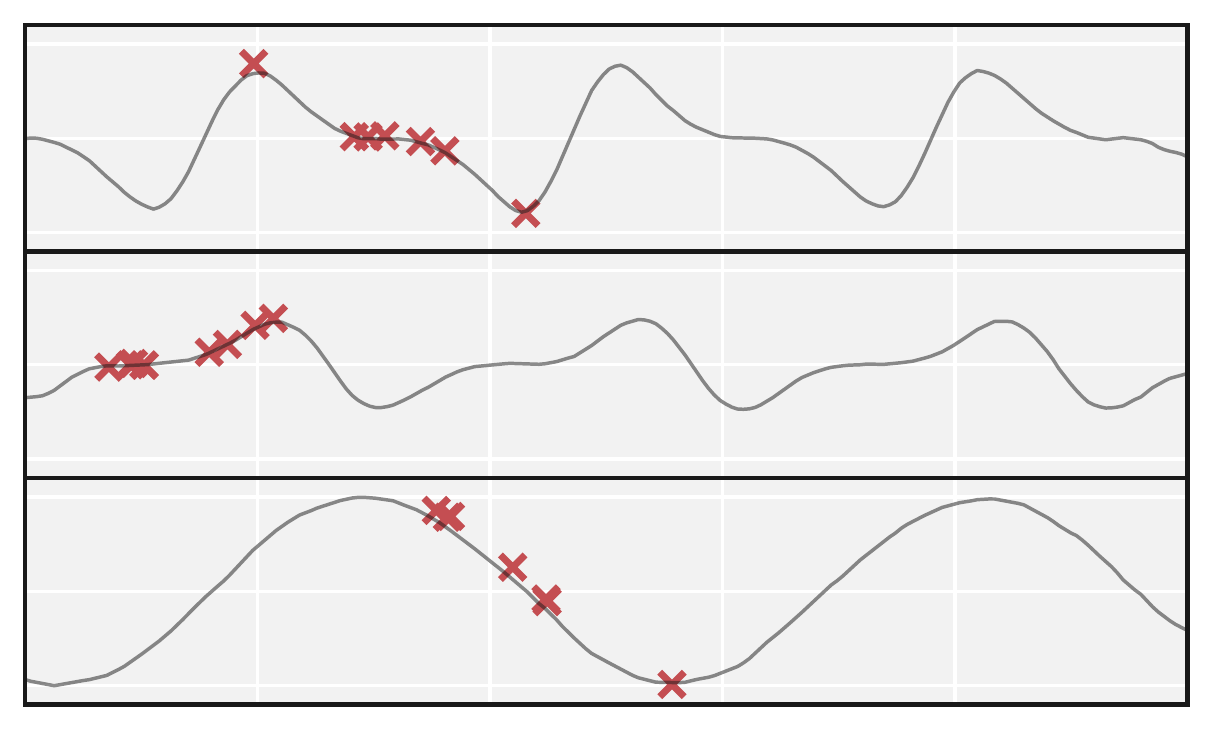}
    };
    \node[inner sep=0pt,right=\hgap of r22,
    ] (r23) {
      \includegraphics[width=\imgw]{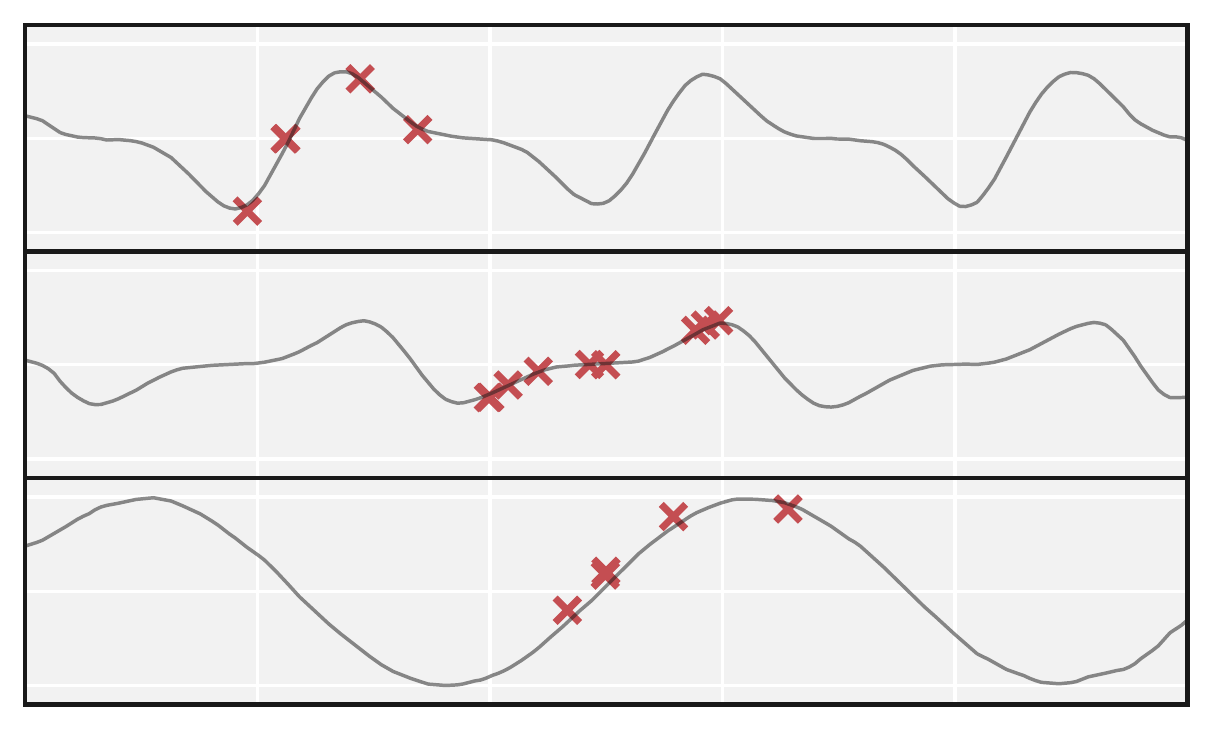}
    };
    \node[inner sep=0pt,right=\hgap of r23,
    ] (r24) {
      \includegraphics[width=\imgw]{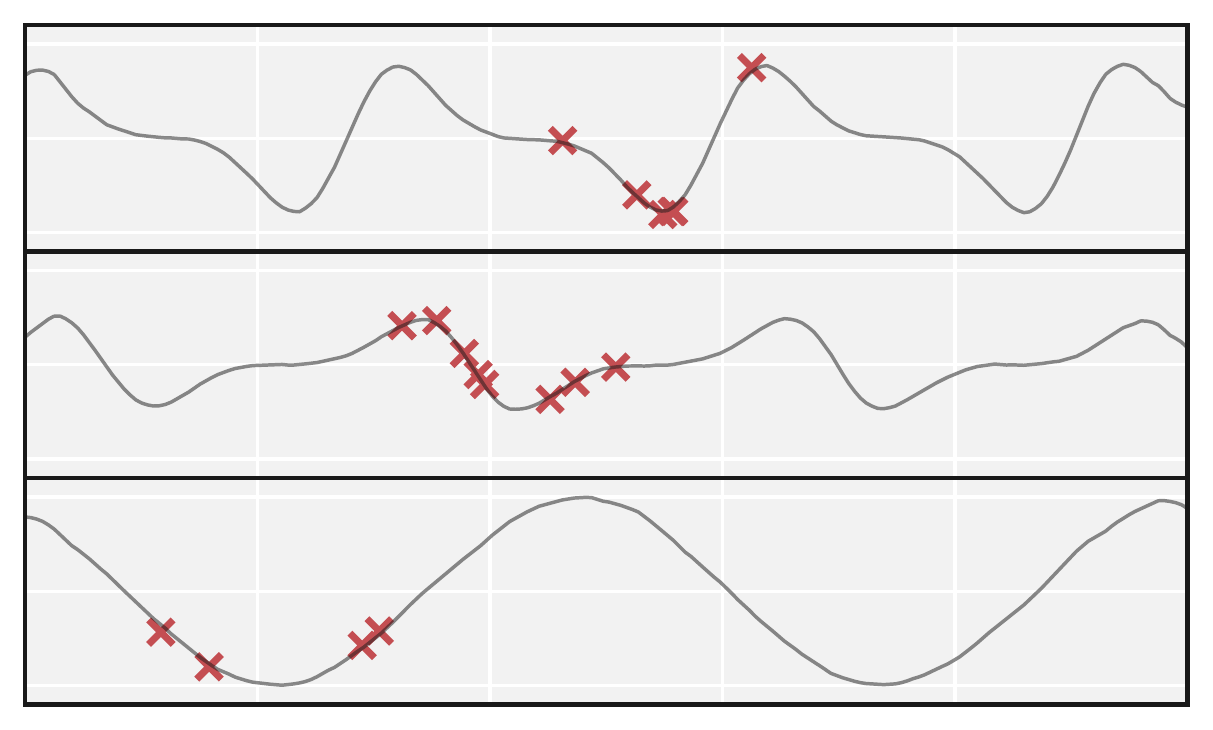}
    };
    \node[inner sep=0pt,below=\vgap of r21,
      label={[rotate=90,anchor=south]left:{\scriptsize\textsf{Cont P-BiGAN}}},
      label={below:{\footnotesize{(a)}}}
    ] (r31) {
      \includegraphics[width=\imgw]{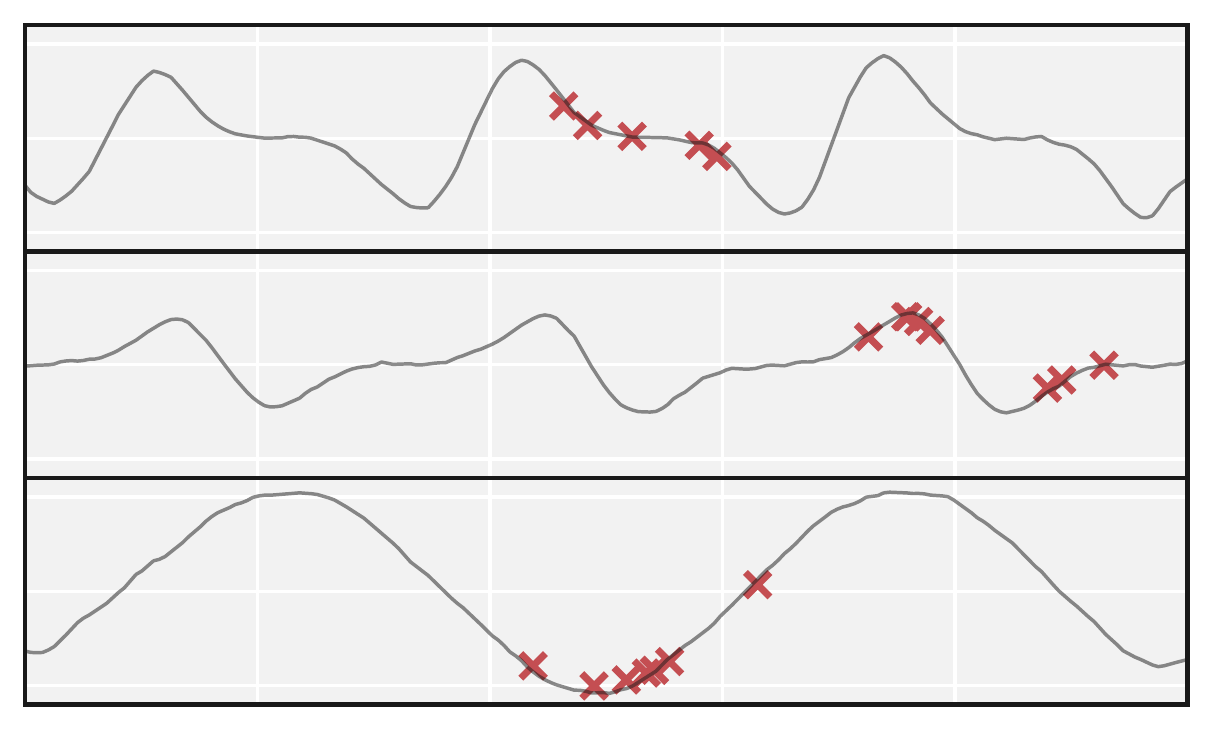}
    };
    \node[inner sep=0pt,right=\hgap of r31,
      label={below:{\footnotesize{(b)}}}
    ] (r32) {
      \includegraphics[width=\imgw]{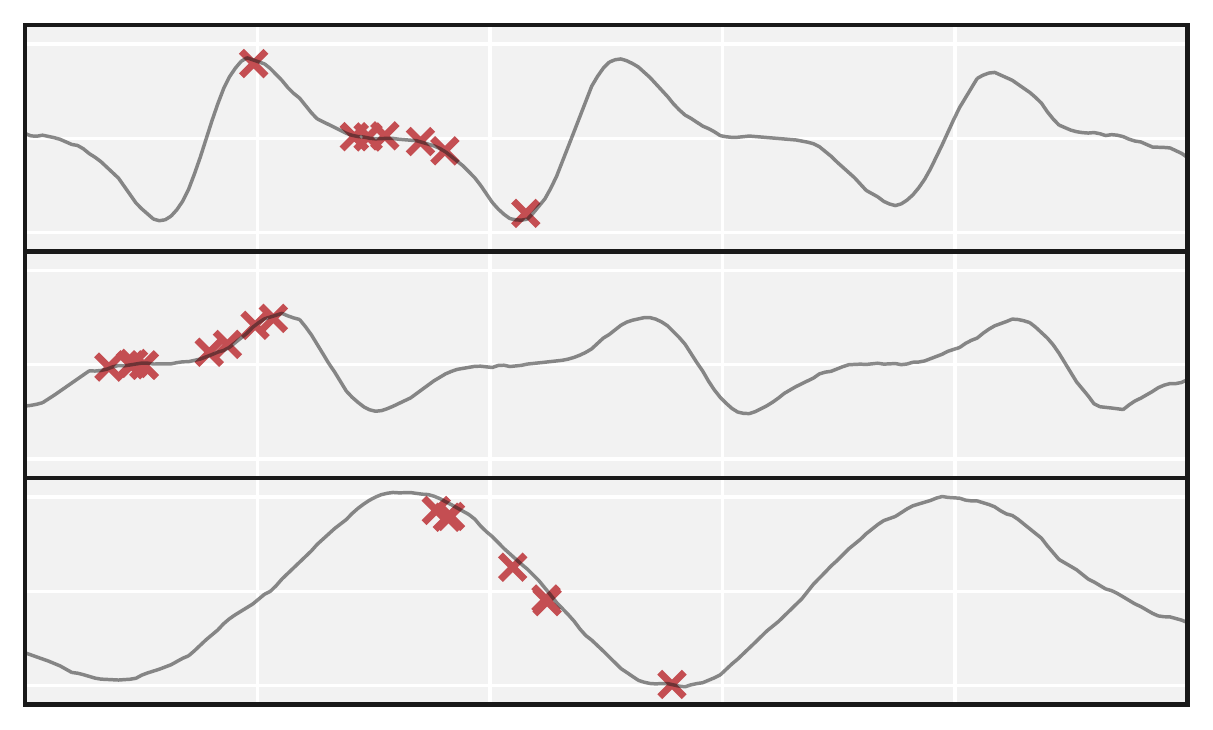}
    };
    \node[inner sep=0pt,right=\hgap of r32,
      label={below:{\footnotesize{(c)}}}
    ] (r33) {
      \includegraphics[width=\imgw]{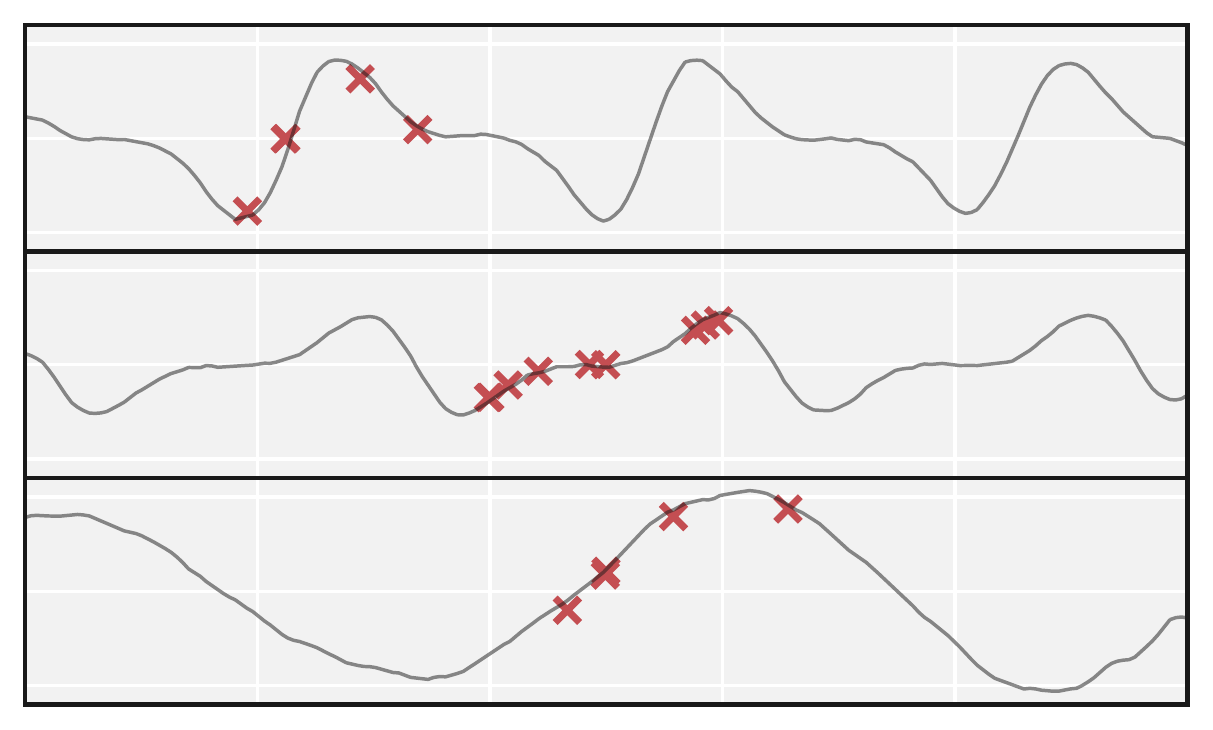}
    };
    \node[inner sep=0pt,right=\hgap of r33,
      label={below:{\footnotesize{(d)}}}
    ] (r34) {
      \includegraphics[width=\imgw]{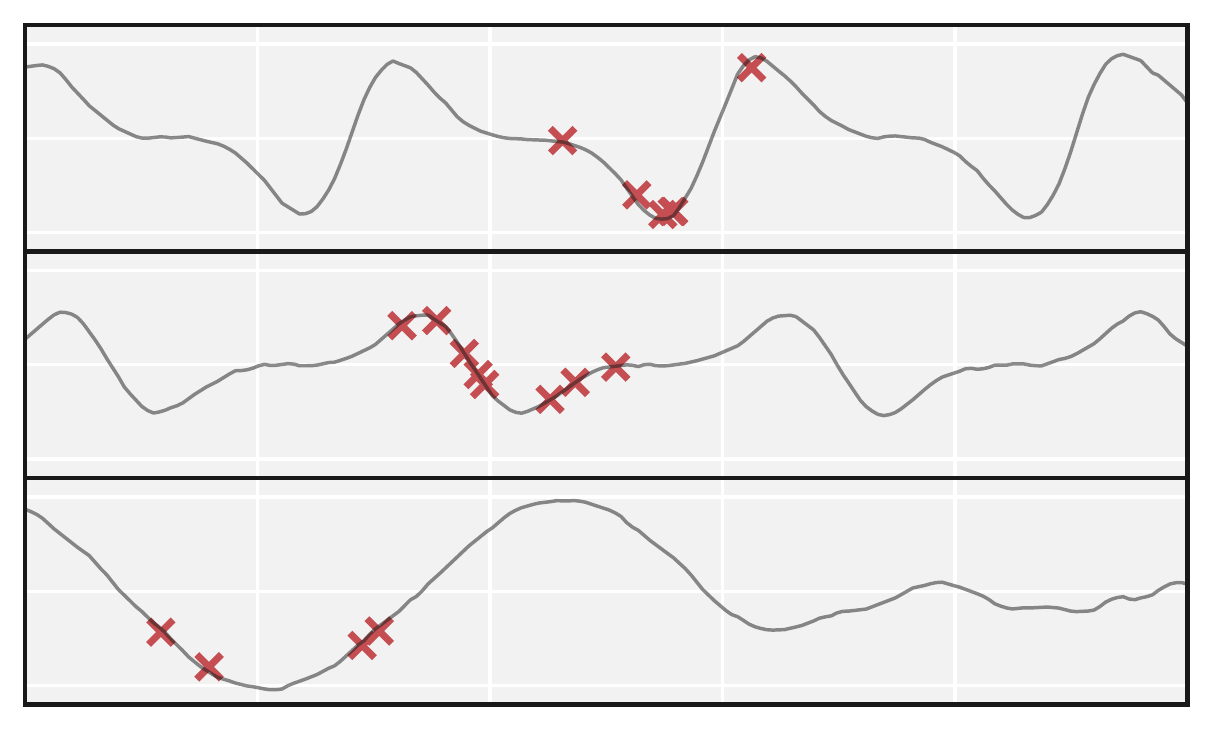}
    };
  \end{tikzpicture}
  \caption{
    Imputation results of Cont P-VAE and Cont P-BiGAN on
    a 3-channel synthetic time series.
    The first row shows four random samples from the training data.
    Each sample has three channels displayed as a group and
    the observations in each channel are shown as the red markers,
    which are drawn from the latent temporal function plotted as
    the gray trajectory.
    The second and the third rows show the inferred latent trajectory
    of each channel,
    conditioned on the same observations shown in the first row
    by Cont P-VAE and Cont P-BiGAN respectively.
    We can see that in general Cont P-VAE produces visually better
    completion results that are consistent with the overall structure
    of the training samples.
    On the other hand, the inferred trajectories of P-BiGAN are less
    smooth (zoom-in to see the details), and it seems that P-BiGAN
    captures more easily the Gaussian noise added in the training data.
    However, P-BiGAN generally produces relatively poor imputation results
    that do not have the consistent overall structure such as
    the right tail in channel 3 of case (c)
    and the right tail in channel 3 of case (d).
    This is similar to the case of high missing rate with
    independent dropout missingness in
    Section~\ref{sec:impute fid},
    as the time series are very sparsely observed
    (7.4 observations in each channel on average).
    Note that if we trained both model on a more densely sampled time series,
    such as the one with times drawn from a homogeneous Poisson process
    with rate $\lambda=200$, the two models will behave similarly.
  }
  \label{fig:toy}
\end{figure*}

\begin{figure*}
  \centering
  \def\vgap{1em}
  \def\hgap{0em}
  \def\toydir{synthetic}
  \def\imgw{.24\textwidth}
  \begin{tikzpicture}
    \node[inner sep=0pt,
      label={[rotate=90,anchor=south]left:{\scriptsize\textsf{Cont P-VAE}}}
    ] (r11) {
      \includegraphics[width=\imgw]{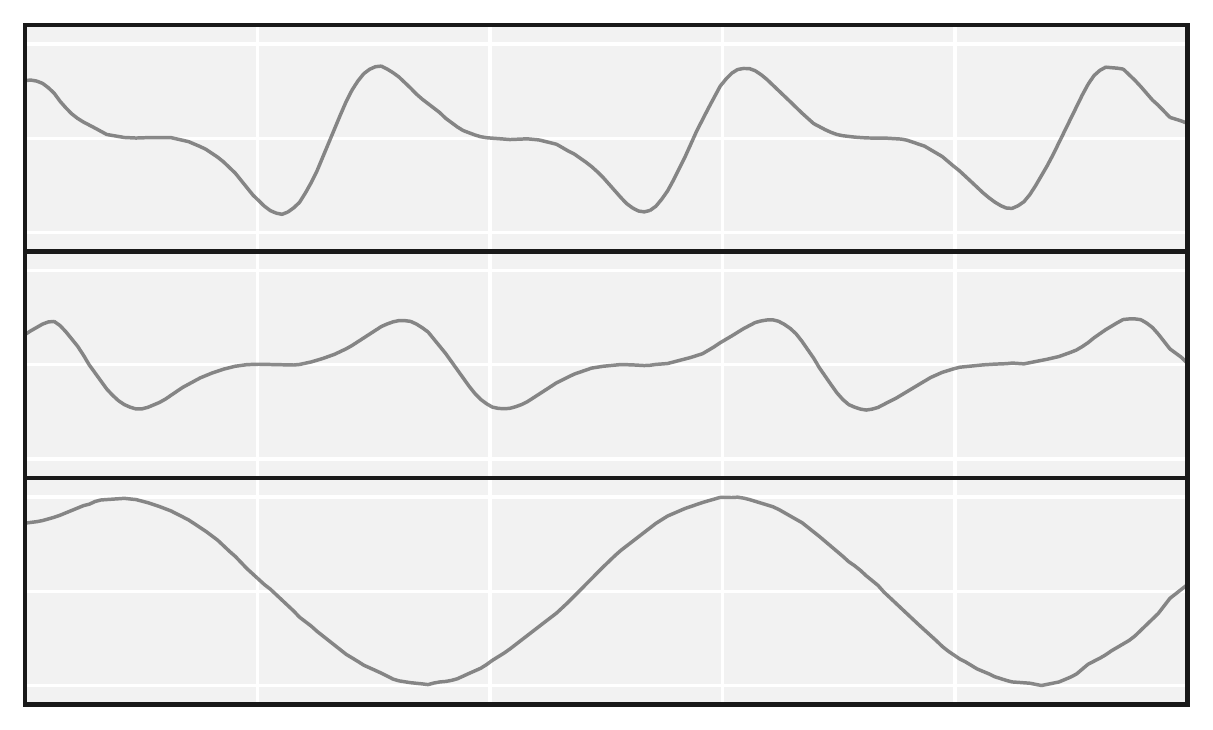}
    };
    \node[inner sep=0pt,right=\hgap of r11,
    ] (r12) {
      \includegraphics[width=\imgw]{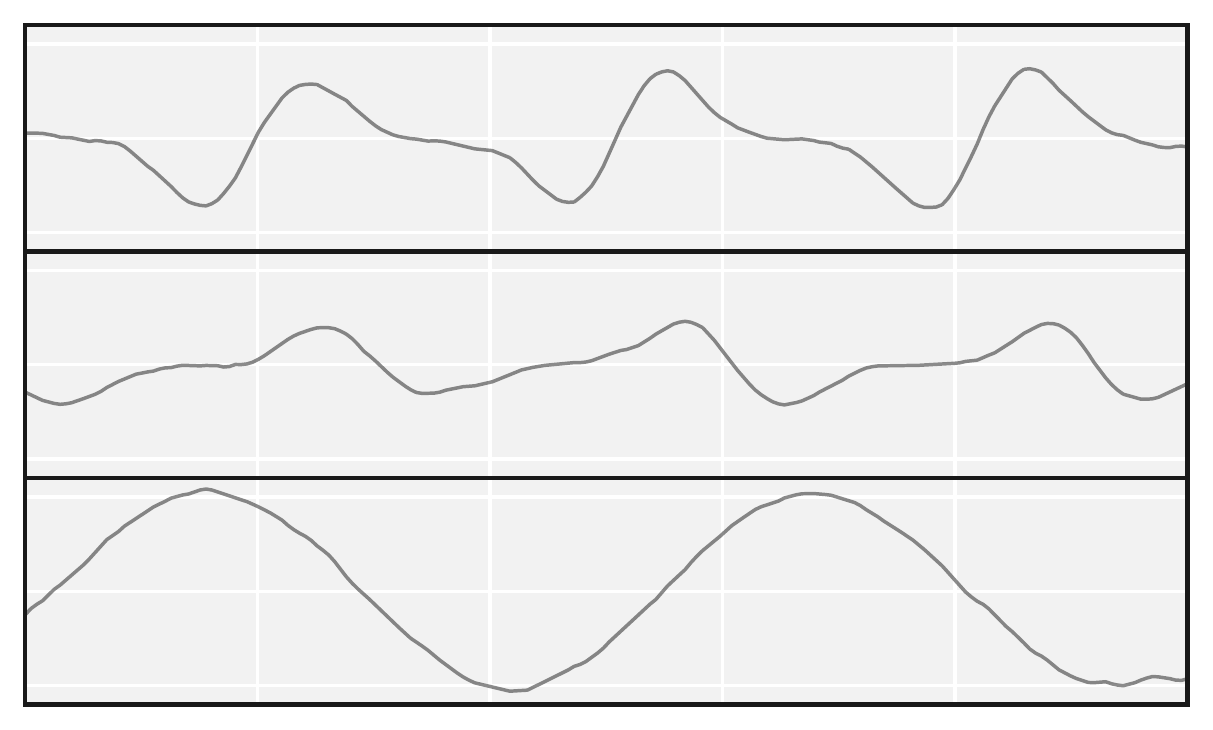}
    };
    \node[inner sep=0pt,right=\hgap of r12,
    ] (r13) {
      \includegraphics[width=\imgw]{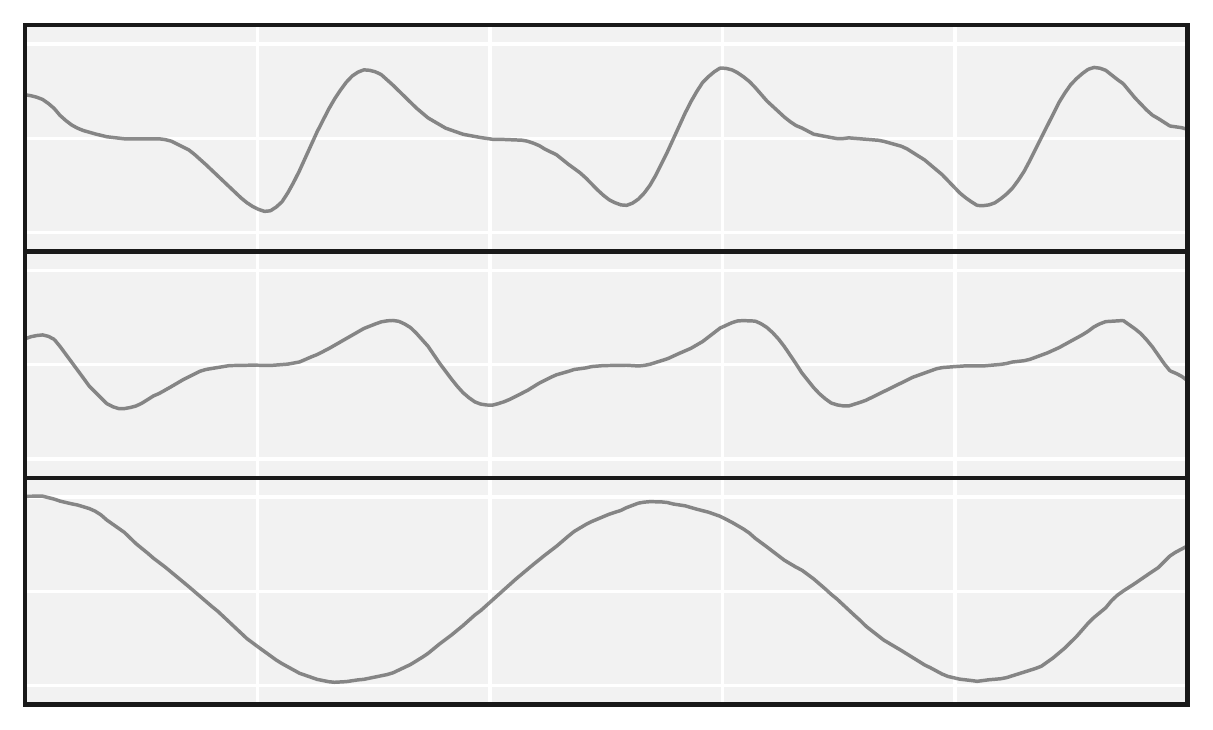}
    };
    \node[inner sep=0pt,right=\hgap of r13,
    ] (r14) {
      \includegraphics[width=\imgw]{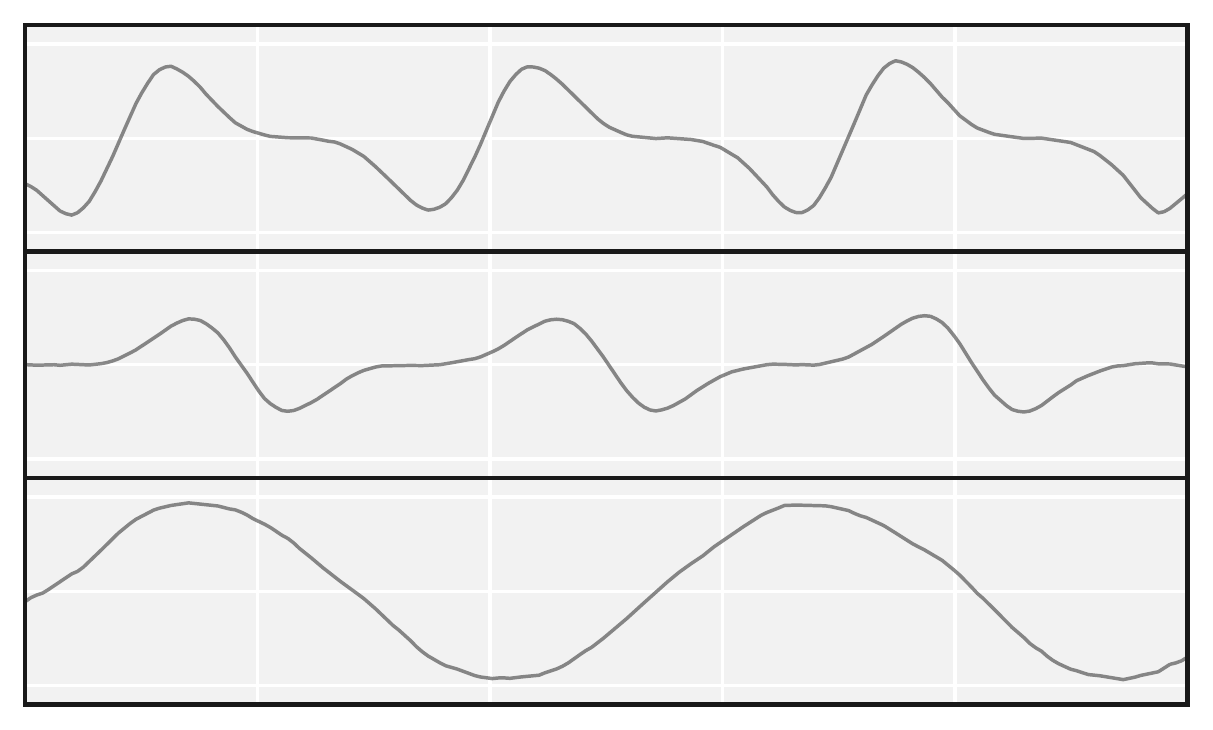}
    };
    \node[inner sep=0pt,below=\vgap of r11,
      label={[rotate=90,anchor=south]left:{\scriptsize\textsf{Cont P-BiGAN}}},
      label={below:{\footnotesize{(a)}}}
    ] (r21) {
      \includegraphics[width=\imgw]{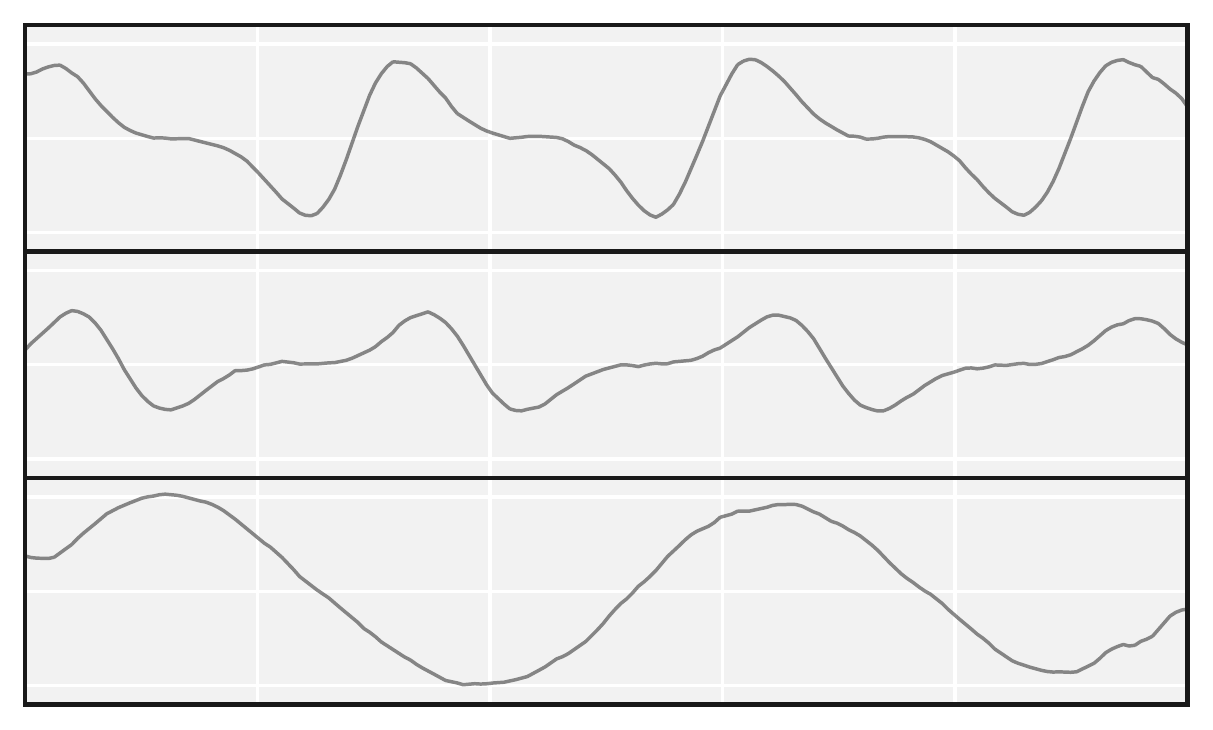}
    };
    \node[inner sep=0pt,right=\hgap of r21,
      label={below:{\footnotesize{(b)}}}
    ] (r22) {
      \includegraphics[width=\imgw]{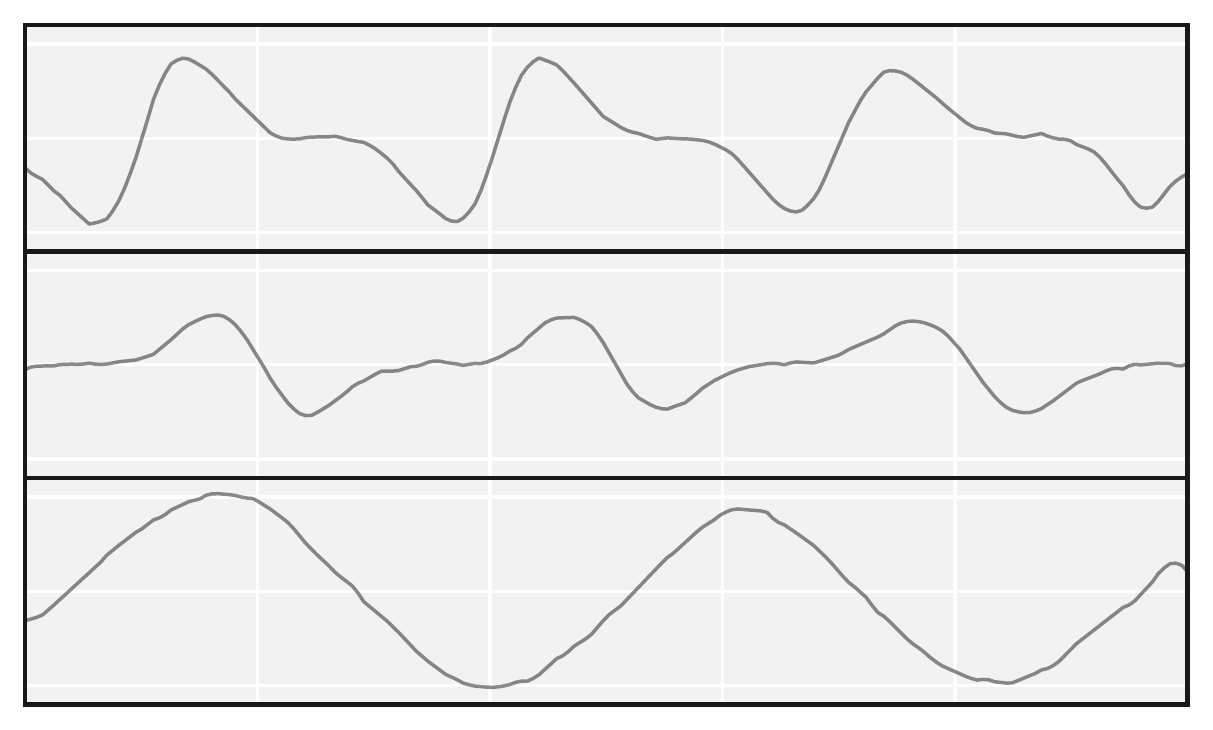}
    };
    \node[inner sep=0pt,right=\hgap of r22,
      label={below:{\footnotesize{(c)}}}
    ] (r23) {
      \includegraphics[width=\imgw]{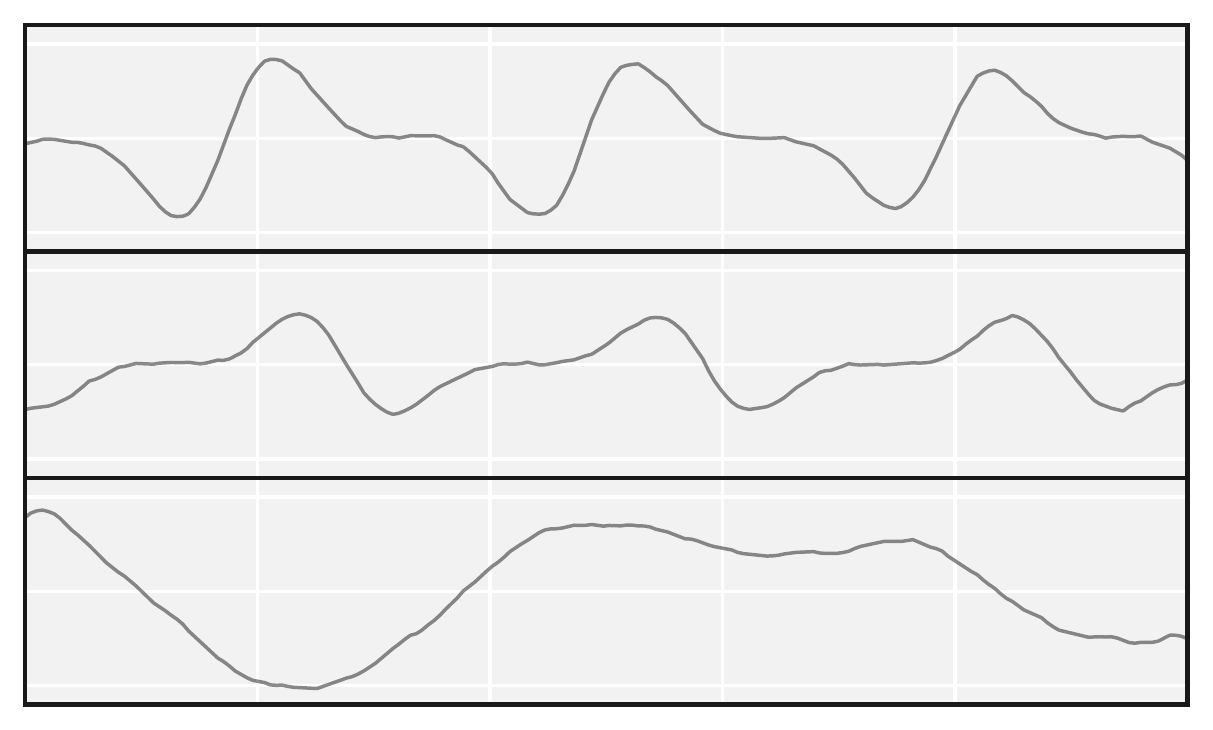}
    };
    \node[inner sep=0pt,right=\hgap of r23,
      label={below:{\footnotesize{(d)}}}
    ] (r24) {
      \includegraphics[width=\imgw]{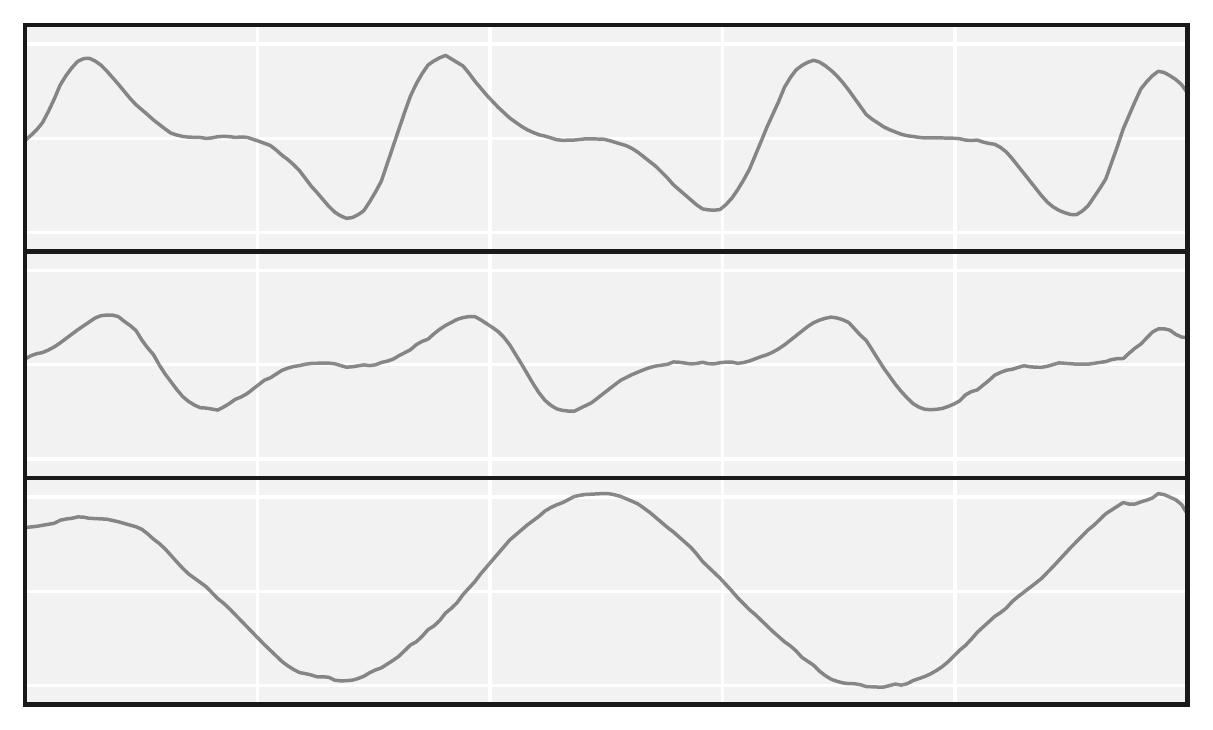}
    };
  \end{tikzpicture}
  \caption{
    Randomly generated samples
    by Cont P-VAE (first row) and Cont P-BiGAN (second row) trained on the
    synthetic time series shown in Figure~\ref{fig:toy}.
    Similar to the imputation results,
    Cont P-VAE produces smoother trajectories
    that are consistent with the ground truth generative process.
    On the contrary, occasionally there are artifacts in the samples
    generated by Cont P-BiGAN such as the trajectory of the third channel
    in case (c).
  }
  \label{fig:toy gen}
\end{figure*}


In this section,
we equip P-VAE and P-BiGAN with the continuous decoder and encoder
described in Section~\ref{sec:cont time} and demonstrate how they work
on a synthetic time series dataset using the same architecture
described in Section~\ref{sec:mimic3}.
We generate a dataset containing 10,000 time series each with three channels over $t\in[0, 1]$
according to the following generative process:
\begin{align*}
  a &\sim \N(0, 10^2) \\
  b &\sim \operatorname{uniform}(0, 10) \\
  f_1(t) &= .8\sin(20(t + a) + \sin(20(t + a))) \\
  f_2(t) &= -.5\sin(20(t + a + 20) + \sin(20(t + a + 20))) \\
  f_3(t) &= \sin(12(t + b))
\end{align*}
where an independent Gaussian noise $\N(0, 0.01^2)$ is added
to each channel.

The observation time points for each channel are drawn independently
from a homogeneous Poisson process with rate $\lambda=30$
sampled continuously within $[d, d+0.25]$
where $d\sim\text{uniform}(0,0.75)$.
This results in 7.4 observations in each channel on average.
The first row of Figure~\ref{fig:toy} shows some examples from
the generated synthetic dataset.


Figure~\ref{fig:toy} and \ref{fig:toy gen}
shows that both P-VAE and P-BiGAN are able to
learn the generative distribution reasonably
given the sparsely and irregularly-sampled observations.
They are both able to learn the periodic dynamics
and infer the latent functions according to sparse observations.
Moreover, both models also learn that the first two channels are correlated
due to the shared random offset $a$ in the generative process,
and the shifting of the third channel is uncorrelated to the first two
channels as shown in Figure~\ref{fig:toy gen}.

From the plots, we can see that P-VAE tends to generate smoother
curves, while P-BiGAN captures the detailed fluctuation caused by the
added Gaussian noise.
This is similar to the results on image modeling
shown in Section~\ref{sec:impute fid}:
GAN-based models capture the local details better
but the results can be noisy when the spatial signals are weak.
On the contrary,
VAE-based models learn the big picture better but the results are
usually smoother.

\section{Details of Experiments}

\subsection{Data Preparation and Preprocessing}
MNIST can be downloaded from: \\
{\url{http://yann.lecun.com/exdb/mnist/}}

CelebA can be downloaded from: \\
{\url{http://mmlab.ie.cuhk.edu.hk/projects/CelebA.html}}

For both MNIST and CelebA, the range of pixel values of the image
is rescaled to $[0, 1]$.


MIMIC-III can be downloaded following the instructions from
its website: \\
{\url{https://mimic.physionet.org/gettingstarted/access/}}

We follow the GitHub repository below to preprocess the
MIMIC-III dataset: \\
{\url{https://github.com/mlds-lab/interp-net}}

For MIMIC-III, we normalize the timestamps within 48 hours
to the interval $[0, 1]$.
The observed values of the time series are rescaled
to $[-1, 1]$ according to the minimum and maximum value of each channel
across the entire training set.

\subsection{Reference Implementations}

We use the following reference implementation for the baseline models
in our experiments.

MisGAN: \\
{\url{https://github.com/steveli/misgan}}

GRU-D: \\
{\url{https://github.com/fteufel/PyTorch-GRU-D}}

Latent ODE: \\
{\url{https://github.com/YuliaRubanova/latent_ode}}

M-RNN: \\
{\url{https://github.com/jsyoon0823/MRNN}}

The continuous convolutional layer described in Section~\ref{sec:cont conv}
is built upon the spline-based convolution operator: \\
{\url{https://github.com/rusty1s/pytorch_spline_conv}}

\subsection{Hyperparameters}
\label{sec:hyperparams}

Most of the hyperparameters of our models used in the experiments
are manually chosen as described in Section~\ref{sec:experiments}
without further tuning and
are specified in the provided implementation.
The only hyperparameter we tune is
the strength of the autoencoding loss of P-BiGAN,
the coefficient $\lambda$ in objective \eqref{eq:p-bigan-ae},
for the CelebA experiments.
We vary $\lambda$ from $\{0, 10^{-5}, 10^{-4}, 10^{-3}, 10^{-2}, 10^{-1}\}$
and choose the one that yields the best FID.
We found that tuning this hyperparameter makes a significant difference
for different missing patterns.
For block observation, smaller $\lambda$ yields better results;
while for independent dropout, larger $\lambda$ yields better results.

\subsection{Computing Infrastructure}
All of our experiments are computed using the NVIDIA GeForce GTX 1080 Ti GPUs.

\begin{figure}
  \centering
  \scalebox{.85}{
  \begin{tikzpicture}[scale=1.4, >=latex']
    \tikzstyle{every node} = [circle, minimum size=20pt, inner sep=0pt]
    \newcommand\xdis{.8}
    \newcommand\obscolor{gray!25}
    \newcommand\arrow{-Stealth[length=1.6mm,width=1.6mm]}
    \node[draw, thick, fill=\obscolor] (x) at (0, 1) {$\xx$};
    \node[draw, thick, fill=\obscolor] (t) at (\xdis, 1) {$\tt$};
    \node[draw, thick] (z) at (0, -1) {$\zz$};
    \node[draw, thick] (xhat) at (0, -3) {$\widehat{\xx}$};
    \node [trapezium, trapezium angle=-60, minimum width=10, draw, thick]
    (q) at (0, 0) {$q_\phi$};
    \node [trapezium, trapezium angle=60, minimum width=10, draw, thick]
    (g) at (0, -2) {$g_\theta$};
    \coordinate (dummy) at (\xdis, -.8);
    \draw[\arrow] (x) -- (q);
    \draw[\arrow] (t) to[out=270, in=90] (q);
    \draw[\arrow] (q) -- (z);
    \draw[-] (t) -- (dummy);
    \draw[\arrow] (z) -- (g);
    \draw[\arrow] (g) -- (xhat);
    \draw[\arrow] (dummy) to[out=270, in=90] (g);
    \node[rectangle] at ($(x)!.5!(t) + (0, .6)$) {$(\xx,\tt)\sim p_\D$};
  \end{tikzpicture}
  }
  \caption{The structure of P-VAE.
  $q_\phi$ is the encoder and $g_\theta$ is the decoder.}
\vspace{3em}
  \centering
  \scalebox{.85}{
  \begin{tikzpicture}[scale=1.4, >=latex']
    \tikzstyle{every node} = [circle, minimum size=20pt, inner sep=0pt]
    \newcommand\aecolor{brown}
    \newcommand\obscolor{gray!25}
    \newcommand\arrow{-Stealth[length=1.6mm,width=1.6mm]}
    \begin{scope}
      \newcommand\xdis{.8}
      \node[draw, thick, fill=\obscolor] (x) at (0, 1) {$\xx$};
      \node[draw, thick, fill=\obscolor] (t) at (\xdis, 1) {$\tt$};
      \node[draw, thick] (z) at (0, -1) {$\zz$};
      \node[draw, thick, color=\aecolor] (xhat) at (0, -3) {$\widehat{\xx}$};
      \node [trapezium, trapezium angle=-60, minimum width=10, draw, thick]
      (q) at (0, 0) {$q_\phi$};
      \node [trapezium, trapezium angle=60, minimum width=10, draw, thick,
             color=\aecolor]
      (g) at (0, -2) {$g_\theta$};
      \coordinate (dummy) at (\xdis, -.8);
      \draw[\arrow] (x) -- (q);
      \draw[\arrow] (q) -- (z);
      \draw[-, color=\aecolor] (t) -- (dummy);
      \draw[\arrow] (t) to[out=270, in=90] (q);
      \draw[\arrow, color=\aecolor] (z) -- (g);
      \draw[\arrow, color=\aecolor] (g) -- (xhat);
      \draw[\arrow, color=\aecolor] (dummy) to[out=270, in=90] (g);
      \node[draw, fit={(-\xdis/2-.5, -1.5) (\xdis+.5, 1.5)}, rectangle,
      rounded corners, dashed, inner sep=0pt] (box1) {};
    \end{scope}
    \begin{scope}[xshift=6em]
      \newcommand\xdis{.8}
      \node[draw, thick] (x') at (\xdis/2, 1) {$\xx'$};
      \node[draw, thick, fill=\obscolor] (t') at (\xdis, -1) {$\tt'$};
      \node[draw, thick, fill=\obscolor] (z') at (0, -1) {$\zz'$};
      \node [trapezium, trapezium angle=-60, minimum width=10, draw, thick]
      (g') at (\xdis/2, 0) {$g_\theta$};
      \draw[\arrow] (g') -- (x');
      \draw[\arrow] (t') to[out=90, in=270] (g');
      \draw[\arrow] (z') to[out=90, in=270] (g');
      \node[draw, fit={(-0.5, -1.5) (\xdis+.5, 1.5)}, rectangle,
      rounded corners, dashed, inner sep=0pt] (box2) {};
    \end{scope}
    \node [draw, thick, rectangle, inner sep=7pt]
    (d) at ($(box1)!.5!(box2) + (0,2.1)$) {$D$};
    \path [draw,\arrow,rounded corners=5pt] (box1.north) |-
    node[above, rectangle] {$\{(\xx,\tt,\zz)\}$} (d.west);
    \path [draw,\arrow,rounded corners=5pt] (box2.north) |-
    node[above, rectangle] {$\{(\xx',\tt',\zz')\}$} (d.east);

    \node[rotate=90, color=\aecolor] at ($(x)!.5!(xhat) + (-1.5, 0)$)
    {$\ell(\xx,\widehat{\xx})$};
    \path [draw,rounded corners=5pt, color=\aecolor]
    (x.west) -- ++(-1,0) |- (xhat.west);

    \node[rectangle] at ($(z')!.5!(t') + (-.1, -1.5)$) {
      \begin{tabular}{c}
        sampling process:\\[.5em]
        $\begin{aligned}
          (\xx,\tt) &\sim p_\D \\
          (\cdot,\tt') &\sim p_\D \\
          \zz' &\sim p_z
        \end{aligned}$
      \end{tabular}
    };
  \end{tikzpicture}
  }
  \caption{P-BiGAN with autoencoding regularization.
    $q_\phi$ is the stochastic encoder.
    $g_\theta$ is the deterministic decoder;
    the two $g_\theta$ share the same parameters.
    $D$ is the discriminator that takes as input a collection of tuples
    $(\xx,\tt,\zz)$ and $(\xx',\tt',\zz')$.
    $\ell(\xx,\widehat{\xx})$ is the autoencoding loss.
    $p_\D$ denotes the empirical distribution of the training dataset $\D$
    and $p_z$ is the prior distribution of the latent code $\zz$.
    The part in brown is for additional autoencoding regularization.
  }
  \label{fig:pbigan ae}
\end{figure}

\end{document}